\newtheorem{theorem}{Theorem}
\newtheorem{lemma}{Lemma}
\begin{document}
%
% paper title
% Titles are generally capitalized except for words such as a, an, and, as,
% at, but, by, for, in, nor, of, on, or, the, to and up, which are usually
% not capitalized unless they are the first or last word of the title.
% Linebreaks \\ can be used within to get better formatting as desired.
% Do not put math or special symbols in the title.
%\title{Exact Indexing of Time Series under \\ Dynamic Time Warping}

\title{Sparse PCA via $\ell_{2,p}$-Norm Regularization \\
for Unsupervised Feature Selection}

\author{\author{Zhengxin Li,
        Feiping Nie*,
        Jintang Bian,
        and Xuelong Li,~\IEEEmembership{Fellow,~IEEE}% <-this % stops a space
\IEEEcompsocitemizethanks{
\IEEEcompsocthanksitem Corresponding author: Feiping Nie. \protect
\IEEEcompsocthanksitem Z. Li was with the College
of Equipment Management and UAV Engineering, Air Force Engineering University, Xi'an 710051, Shaanxi, P. R. China.
He is currently a postdoctor in the School of Computer Science
and Center for OPTical IMagery Analysis and Learning (OPTIMAL),
Northwestern Polytechnical University. \protect
E-mail: {zhengxinli@nwpu.edu.cn}
\IEEEcompsocthanksitem F. Nie, J. Bian and X. Li were with
the School of Computer Science
and Center for OPTical IMagery Analysis and Learning (OPTIMAL),
Northwestern Polytechnical University, Xi'an 710072, Shaanxi, P. R. China.\protect
E-mail: {feipingnie@gmail.com; bianjintang@gmail.com; xuelong\_li@nwpu.edu.cn}
}}}

%\author{\author{Zhengxin Li
%\IEEEcompsocitemizethanks{\IEEEcompsocthanksitem Z. Li was with the College
%of Equipment Management and UAV Engineering, Air Force Engineering University, Xi'an 710051, Shaanxi, P. R. China.
%He is currently a postdoctor in the School of Computer Science
%and Center for OPTical IMagery Analysis and Learning (OPTIMAL),
%Northwestern Polytechnical University. \protect\\
%E-mail: {zhengxinli@nwpu.edu.cn}}
%}}

% note the % following the last \IEEEmembership and also \thanks -
% these prevent an unwanted space from occurring between the last author name
% and the end of the author line. i.e., if you had this:
%
% \author{....lastname \thanks{...} \thanks{...} }
%                     ^------------^------------^----Do not want these spaces!
%
% a space would be appended to the last name and could cause every name on that
% line to be shifted left slightly. This is one of those "LaTeX things". For
% instance, "\textbf{A} \textbf{B}" will typeset as "A B" not "AB". To get
% "AB" then you have to do: "\textbf{A}\textbf{B}"
% \thanks is no different in this regard, so shield the last } of each \thanks
% that ends a line with a % and do not let a space in before the next \thanks.
% Spaces after \IEEEmembership other than the last one are OK (and needed) as
% you are supposed to have spaces between the names. For what it is worth,
% this is a minor point as most people would not even notice if the said evil
% space somehow managed to creep in.

% The paper headers
\markboth{Journal of \LaTeX\ Class Files,~Vol.~14, No.~8, August~2015}%
{Shell \MakeLowercase{\textit{et al.}}: Bare Demo of IEEEtran.cls for Computer Society Journals}
% The only time the second header will appear is for the odd numbered pages
% after the title page when using the twoside option.
%
% *** Note that you probably will NOT want to include the author's ***
% *** name in the headers of peer review papers.                   ***
% You can use \ifCLASSOPTIONpeerreview for conditional compilation here if
% you desire.

% The publisher's ID mark at the bottom of the page is less important with
% Computer Society journal papers as those publications place the marks
% outside of the main text columns and, therefore, unlike regular IEEE
% journals, the available text space is not reduced by their presence.
% If you want to put a publisher's ID mark on the page you can do it like
% this:
%\IEEEpubid{0000--0000/00\$00.00~\copyright~2015 IEEE}
% or like this to get the Computer Society new two part style.
%\IEEEpubid{\makebox[\columnwidth]{\hfill 0000--0000/00/\$00.00~\copyright~2015 IEEE}%
%\hspace{\columnsep}\makebox[\columnwidth]{Published by the IEEE Computer Society\hfill}}
% Remember, if you use this you must call \IEEEpubidadjcol in the second
% column for its text to clear the IEEEpubid mark (Computer Society jorunal
% papers don't need this extra clearance.)

% use for special paper notices
%\IEEEspecialpapernotice{(Invited Paper)}

% for Computer Society papers, we must declare the abstract and index terms
% PRIOR to the title within the \IEEEtitleabstractindextext IEEEtran
% command as these need to go into the title area created by \maketitle.
% As a general rule, do not put math, special symbols or citations
% in the abstract or keywords.
\IEEEtitleabstractindextext{%
\begin{abstract}

In the field of data mining,
how to deal with high-dimensional data is an inevitable problem.
Unsupervised feature selection has attracted more and more attention because it does not rely on labels.
The performance of spectral-based unsupervised methods depends on the quality of constructed similarity matrix,
which is used to depict the intrinsic structure of data.
However, real-world data contain a large number of noise samples and features, making the similarity matrix constructed by original data cannot be completely reliable.
Worse still,
the size of similarity matrix expands rapidly as the number of samples increases,
making the computational cost increase significantly.
Inspired by principal component analysis,
we propose a simple and efficient unsupervised feature selection method,
by combining reconstruction error with $\ell_{2,p}$-norm regularization.
The projection matrix, which is used for feature selection,
is learned by minimizing the reconstruction error under the sparse constraint.
Then, we present an efficient optimization algorithm to solve the proposed unsupervised model,
and analyse the convergence and computational complexity of the algorithm theoretically.
Finally,
extensive experiments on real-world data sets demonstrate the effectiveness of our proposed method.

\end{abstract}

% Note that keywords are not normally used for peerreview papers.
\begin{IEEEkeywords}
Dimension Reduction, Principal Component Analysis, $\ell_{2,p}$-Norm, Unsupervised Feature Selection.
\end{IEEEkeywords}}

% make the title area
\maketitle

% To allow for easy dual compilation without having to reenter the
% abstract/keywords data, the \IEEEtitleabstractindextext text will
% not be used in maketitle, but will appear (i.e., to be "transported")
% here as \IEEEdisplaynontitleabstractindextext when the compsoc
% or transmag modes are not selected <OR> if conference mode is selected
% - because all conference papers position the abstract like regular
% papers do.
\IEEEdisplaynontitleabstractindextext
% \IEEEdisplaynontitleabstractindextext has no effect when using
% compsoc or transmag under a non-conference mode.

% For peer review papers, you can put extra information on the cover
% page as needed:
% \ifCLASSOPTIONpeerreview
% \begin{center} \bfseries EDICS Category: 3-BBND \end{center}
% \fi
%
% For peerreview papers, this IEEEtran command inserts a page break and
% creates the second title. It will be ignored for other modes.
\IEEEpeerreviewmaketitle

\IEEEraisesectionheading{\section{Introduction}\label{sec:introduction}}

\IEEEPARstart{W}{ith} the rapid development of information technology, high-dimensional data exist almost everywhere in all walks of life, such as
weather forecast~\cite{Ma8440052},
financial transaction analysis~\cite{He2020Data},
geological prospecting~\cite{Dentith2020},
image search~\cite{Deng2019},
text mining~\cite{Ali2019Fuzzy},
bioinformatics~\cite{Luo8097006}, \emph{etc}.
Unfortunately,
the curse of dimensionality seriously restricts many practical applications.
To solve this problem,
feature selection is used to
reduce the dimension by finding a relevant feature subset of data~\cite{2003An}.
The advantages of feature selection mainly include:
improving the performance of data mining tasks,
reducing computational cost,
improving the interpretability of data.
Therefore, feature selection has become a necessary prerequisite for many data mining tasks, such as
pattern recognition~\cite{Zhang2019Fuzzy},
clustering~\cite{Idrobo2019clustering},
classification~\cite{Kayabol2020classification},
similarity retrieval~\cite{Wu2020retrieval}, \emph{etc}.

Based on whether data labels are available,
feature selection can be divided into supervised and unsupervised methods~\cite{Yu2004Eficient}.
Supervised feature selection utilizes the correlation between features and labels to find discriminative features.
However, obtaining labels is expensive, or even impractical in many applications.
Thus, unsupervised feature selection has attracted a lot of attention,
because it does not rely on labels.
In the paper, we propose a new method for unsupervised feature selection.
The main contributions are summarized as follows:
\begin{itemize}
  \item A new unsupervised model is proposed to perform feature selection.
The sparse projection matrix is learned by minimizing the reconstruction error of data.
  \item An optimization algorithm is presented to solve the proposed model. We prove the convergence of the algorithm,
and evaluate its computational complexity,
which is linear to the number of samples.
  \item Extensive experiments on real-world data sets demonstrate the effectiveness of our proposed method.
\end{itemize}

The rest paper is organized as follows.
In Section~\ref{sec:Background},
we give a brief review of the related work and introduce some notations and definitions.
In Section~\ref{sec:proposed method},
we propose a new unsupervised feature selection model.
In Section~\ref{sec:Optimization algorithm},
the optimization algorithm is presented to solve the proposed model.
In Section~\ref{sec:Discussion},
we discuss the convergence and computational complexity of the optimization algorithm.
In Section~\ref{sec:experiments},
experiments are implemented to evaluate the effectiveness of the proposed method.
Finally,
we provide the conclusion in Section~\ref{sec:conclusions}.

\section{Background}\label{sec:Background}

\subsection{Related work}\label{sec:related work}

The techniques of unsupervised feature selection
can be divided into three types~\cite{Li2014Clustering}:
filter, wrapper and embedded methods.

Filter methods~\cite{Lazar2012A} are independent of the data mining tasks.
They are usually intuitive and computationally efficient.
LapScore (Laplacian Score)~\cite{LaplacianScore2005} is
one of the most classic filter methods.
It calculates the score for each feature independently,
according to its ability to preserve the intrinsic structure of original data.
Then, all the features are ranked by the scores.
Because each feature is evaluated independently,
it may work well on binary-cluster problems,
but are very likely to fail in multi-cluster cases~\cite{Deng2010Unsupervised}.

Wrapper methods~\cite{Kabir2008A} combine feature selection with the data mining tasks.
The mining algorithm is utilized to evaluate the effectiveness of selected features.
The result of feature selection performs well in the mining task.
However, wrapper methods are usually computationally expensive and weak in generalization.

Embedded methods~\cite{Chong2016Feature} integrate feature selection into model learning.
Since there is no need to evaluate feature subsets,
they are more efficient than wrapper methods~\cite{Li2016Feature}.
Thus, embedded methods have gradually become a hotspot, and many representative methods keep emerging, such as
MCFS (Multi-Cluster Feature Selection)~\cite{Deng2010Unsupervised},
UDFS (Unsupervised Discriminative Feature Selection)~\cite{YangUDFS2011},
EUFS (Embedded Unsupervised Feature Selection)~\cite{Wang2015Embedded},
DGUFS (Dependence Guided Unsupervised Feature Selection)~\cite{GuoDGUFS2018},
SOGFS (Structured Optimal Graph Feature Selection)~\cite{SOGFS2019}
and RNE (Robust Neighborhood Embedding Feature Selection)~\cite{RNE2020},
\emph{etc}.

MCFS selects features by using spectral regression with $\ell_{1}$-norm regularization, so that the multi-cluster structure of original data can be preserved.
UDFS selects the discriminative features by joint discriminative analysis and $\ell_{2,1}$-norm minimization.
EUFS embeds unsupervised feature selection into a clustering algorithm via sparse learning. $\ell_{2,1}$-norm is applied on the cost function to reduce the effects of noise.
DGUFS enhances the interdependence among original data, cluster labels, and selected features.
SOGFS conducts feature selection and local structure learning simultaneously, so that the similarity matrix can be determined adaptively.
RNE selects features by calculating feature weight matrix through locally linear embedding algorithm, and ultilizing $\ell_{1}$-norm to minimize its reconstruction error.

Most embedded methods, even including LapScore,
utilize spectral analysis and manifold learning to select discriminative features.
They usually build a similarity matrix
to depict the intrinsic structure of original data.
However, real-world data contain a large number of noise samples and features, making the similarity matrix constructed by original data cannot be completely reliable.
Worse still,
the size of similarity matrix expands rapidly as the number of samples increases,
making the computational cost increase significantly.
Inspired by principal component analysis~\cite{2002Principal},
we propose a simple and efficient unsupervised feature selection method from a new perspective.

\subsection{Notations and definitions}\label{sec:Notations and definitions}

We first introduce some notations and definitions that will be used throughout the paper.
Given a matrix $M\in \mathbb{R}^{n \times d}$,
the ($i,j$)-th element of $M$ is denoted by $m_{ij}$,
its $i$-th row, $j$-th column are denoted by $m^i$, $m_j$ respectively.
The transpose of $M$ is denoted by $M^T$.
The trace of $M$ is denoted by ${\rm Tr}(M)$.
% The $F$-norm of $M$ is denoted by $\|M\|_F$.
The $\ell_{2,p}$-norm is defined as:
\begin{equation}\label{eq:l2,p norm}
\begin{split}
     \left\|{\rm M}\right\|_{2,p}=\left(\sum_{i=1}^{n}  {\left( \sum_{j=1}^{d} m_{ij}^2 \right) }^\frac{p}{2} \right)^\frac{1}{p}
     = \left(\sum_{i=1}^{n} \left\| m^i \right\|_2^p \right)^\frac{1}{p}, \  p>0
\end{split}
\end{equation}

When $p \geq 1$, since it satisfies the basic norm conditions,
$\ell_{2,p}$-norm is a valid norm.
However, when $0<p<1$, $\ell_{2,p}$ is not a valid norm.
For convenience, we still call them norms in the paper.

\section{Unsupervised feature selection model}\label{sec:proposed method}

Supposing a data set \{$x_1,x_2,\ldots,x_n$\} contains $n$ data points $x_i \in \mathbb{R}^{d \times 1}$,
$X\in \mathbb{R}^{n \times d}$ denotes the data matrix.
Without loss of generality, we assume that all the data points are centralized:
\begin{equation}\label{eq:data centralized}
\begin{split}
    \sum_{i=1}^n x_i=0
\end{split}
\end{equation}

Supposing we explore principal component analysis for dimension reduction,
the new coordinate system formed by principal components is:
\begin{equation}\label{eq:coordinate system}
\begin{split}
     & \{w_1,w_2,\ldots,w_d\}, \quad  w_i \in \mathbb{R}^{d \times 1} \\
     & s.t.  \quad \|w_i\|=1, \quad w^T_iw_j=0 \ (i\neq j)
\end{split}
\end{equation}

If we want to reduce the dimension of the data points from $d$ to  $m$ $(m<d)$,
some coordinates in the coordinate system should be discarded.
Then, the new coordinate system $W \in \mathbb{R}^{d \times m}$ is:
\begin{equation}\label{eq:new coordinate system}
\begin{split}
     & W=\{w_1,w_2,\ldots,w_{m}\}, \quad  w_i \in \mathbb{R}^{d \times 1} \\
     & s.t.  \quad \|w_i\|=1, \quad w^T_i w_j=0 \ (i\neq j)
\end{split}
\end{equation}

Thus, the projection of the data point $x_i$ in the new coordinate system is:
\begin{equation}\label{eq:projection result 1}
\begin{split}
     z_i=\{z_{i1},z_{i2},\ldots,z_{im}\}^T, \quad z_{ij}=w^T_j x_i
\end{split}
\end{equation}
where $z_{ij}$ is the $j$th-dimension coordinate of $x_i$ in the low dimensional coordinate system.
Eq.~\eqref{eq:projection result 1} can be rewritten as
\begin{equation}\label{eq:projection result 2}
\begin{split}
     z_i=W^Tx_i
\end{split}
\end{equation}

If we reconstruct $x_i$ with $z_i$, the original data point can be recovered as:
\begin{equation}\label{eq:reconstruct data}
\begin{split}
     \hat{x_i}=\sum_{j=1}^{m} {z_{ij}w_j} = Wz_{i}
\end{split}
\end{equation}

For the entire data set,
the sum of the error between each original data point $x_i$ and its reconstructed point $\hat{x_i}$ is:
\begin{equation}\label{eq:reconstructed error 1}
\begin{split}
     \sum_{i=1}^{n}\left\|\hat{x}_{i}-x_{i}\right\|_2^2
\end{split}
\end{equation}

We can substitute Eq.~\eqref{eq:reconstruct data} into Eq.~\eqref{eq:reconstructed error 1}:
\begin{equation}\label{eq:reconstructed error 2}
\begin{split}
     \sum_{i=1}^{n}\left\|Wz_{i}-x_{i}\right\|_2^2
\end{split}
\end{equation}

According to the property of $\ell_2$-norm,
Eq.~\eqref{eq:reconstructed error 2} can be further expanded to:
\begin{equation}\label{eq:equivalent transformation 1}
\begin{split}
    \sum_{i=1}^{n}\left(W z_{i}\right)^{T}\left(W z_{i}\right)-2 \sum_{i=1}^{n}\left(W z_{i}\right)^{T} x_{i}+\sum_{i=1}^{n} x_{i}^{T} x_{i}
\end{split}
\end{equation}

Due to $w^T_iw_j=0 \ (i\neq j)$, we can get $W^TW=I$.
Then, Eq.~\eqref{eq:equivalent transformation 1} can be converted to the following equation:
\begin{equation}\label{eq:equivalent transformation 2}
\begin{split}
    \sum_{i=1}^{n} z_{i}^{T} z_{i}-2 \sum_{i=1}^{n} z_{i}^{T} W^{T} x_{i}+\sum_{i=1}^{n} x_{i}^{T} x_{i}
\end{split}
\end{equation}

According to Eq.~\eqref{eq:projection result 2},
the above equation can be rewritten as
\begin{equation}\label{eq:equivalent transformation 3}
\begin{split}
    & \sum_{i=1}^{n} z_{i}^{T} z_{i}-2 \sum_{i=1}^{n} z_{i}^{T} z_{i}+\sum_{i=1}^{n} x_{i}^{T} x_{i} \\
    & =-\sum_{i=1}^{n} z_{i}^{T} z_{i}+\sum_{i=1}^{n} x_{i}^{T} x_{i}
\end{split}
\end{equation}

According to Eq.~\eqref{eq:data centralized}, Eq.~\eqref{eq:projection result 2} and the properties of matrix trace, we can get
\begin{equation}\label{eq:equivalent transformation 4}
\begin{split}
    \sum_{i=1}^{n} z_{i}^{T} z_{i} & = \operatorname{Tr}\left(W^{T}\left(\sum_{i=1}^{n} x_{i} x_{i}^{T}\right) W\right) \\
    & =\operatorname{Tr}(W^{T} X^{T}X W)
\end{split}
\end{equation}

We can further substitute Eq.~\eqref{eq:equivalent transformation 4} into Eq.~\eqref{eq:equivalent transformation 3}.
Thus, Eq.~\eqref{eq:reconstructed error 1} is equivalent to
\begin{equation}\label{eq:equivalent transformation 5}
\begin{split}
    -\operatorname{Tr}(W^{T} X^{T}X W)+\sum_{i=1}^{n} x_{i}^{T} x_{i}
\end{split}
\end{equation}

Principal component analysis requires that the reconstruction error should be minimal.
Thus, the objective function is
\begin{equation}\label{eq:objective function of PCA 1}
\begin{split}
    \min_{W^T W=I} -\operatorname{Tr}(W^{T} X^{T}X W)+\sum_{i=1}^{n} x_{i}^{T} x_{i}
\end{split}
\end{equation}

For a given data set, $\sum_{i=1}^{n} x_{i}^{T} x_{i}$ is a constant,
which has no impact on the minimization of the objective function.
Then, Eq.~\eqref{eq:objective function of PCA 1} can be rewritten as
\begin{equation}\label{eq:objective function of PCA 2}
\begin{split}
    \min_{W^T W=I} -\operatorname{Tr}(W^{T} X^{T}X W)
\end{split}
\end{equation}

For the general case that the data points are not centralized,
Eq.~\eqref{eq:objective function of PCA 2} can be rewritten as
\begin{equation}\label{eq:objective function of PCA 3}
\begin{split}
    \min_{W^T W=I} -\operatorname{Tr}(W^T S_t W)
\end{split}
\end{equation}
where $S_t= X^THX$ is the total scatter matrix.
$H$ is the centering matrix:
\begin{equation}\label{eq:centering matrix}
\begin{split}
    H = I_n - \frac{1}{n} \textbf{1}\textbf{1}^T
\end{split}
\end{equation}

As we all known,
$\ell_{2,0}$-norm is the most suitable for feature selection.
For the sake of feature selection,
we add a regularization term to the objective function of Eq.~\eqref{eq:objective function of PCA 3}:
\begin{equation}\label{eq:problem definition 0}
\begin{split}
    & \min_{W^T W=I} -Tr(W^T S_t W)+\gamma \| W \|_{2,0}
\end{split}
\end{equation}
where $\gamma>0$ is a regularization parameter.
The regularization term can make the projection matrix $W$ be sparse on the row vectors,
so as to complete the task of feature selection.
Unfortunately,
it is difficult to solve $\ell_{2,0}$-norm problem directly.
Because $\ell_{2,p}$-norm $(0<p\leq 1)$
is a reasonable choice to approximate $\ell_{2,0}$-norm in the feature selection task~\cite{SOGFS2019},
we can replace $\ell_{2,0}$-norm with $\ell_{2,p}$-norm.
Thus, Eq.~\eqref{eq:problem definition 0} can be rewritten as
\begin{equation}\label{eq:problem definition}
\begin{split}
    & \min_{W^T W=I} -Tr(W^T S_t W)+\gamma \| W \|_{2,p}^p,\ 0<p\leq 1
\end{split}
\end{equation}

In the process of minimizing the objective function of Eq.~\eqref{eq:problem definition},
$\gamma \| W \|_{2,p}^p$ favors a small number of nonzero row vector $w^i$.
The projection matrix $W$ should satisfy the following two constraints:
it is sparse in the row vectors;
the reconstruction error of all data points should be as small as possible,
which is just the optimization direction of principal component analysis.
For simplicity,
we denote the proposed method as SPCAFS (Sparse Principal Component Analysis for Feature Selection).

\section{Optimization algorithm }\label{sec:Optimization algorithm}

In this section,
we present the optimization algorithm to solve problem~\eqref{eq:problem definition}.
According to the definition of $\ell_{2,p}$-norm,
problem~\eqref{eq:problem definition} can be rewritten as
\begin{equation}\label{eq:problem transition 1}
\begin{split}
    & \min_{W^T W=I} -\operatorname{Tr}(W^T S_t W)+
   \gamma \sum_{i=1}^{d} \left\| w^i \right\|_2^p   \\
\end{split}
\end{equation}
where $w^i\in \mathbb{R}^{m \times 1}$ is the $i$-th row vector of $W$.
Since $\left\| w^i \right\|_2^p$ can be zero in theory,
Eq.~\eqref{eq:problem transition 1} may be non-differentiable.
To avoid this case,
we replace $\left\| w^i \right\|_2^p$ with $\left(w^{iT}w^{i}\right)^\frac{p}{2}$.
Further, it is regularized as
\begin{equation}\label{eq:problem transition 2}
\begin{split}
    \left(w^{iT}w^{i}\right)^\frac{p}{2} \rightarrow \left(w^{iT}w^{i}+\epsilon\right)^\frac{p}{2}
\end{split}
\end{equation}
where $\epsilon$ is a sufficiently small constant.
Then, Eq.~\eqref{eq:problem transition 1} can be equivalent to
\begin{equation}\label{eq:problem transition 3}
\begin{split}
    & \min_{W^T W=I} -\operatorname{Tr}(W^T S_t W)
    +\gamma  \sum_{i=1}^{d} \left(w^{iT}w^{i}+\epsilon\right)^\frac{p}{2}
\end{split}
\end{equation}

\begin{theorem}\label{theorem:1}
 The solution to problem~\eqref{eq:problem transition 3},
 i.e. $W \in \mathbb{R}^{d \times m}$, will contain at least $m$ non-zero rows.
\end{theorem}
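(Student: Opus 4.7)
The plan is to argue by contradiction and rely almost entirely on the orthonormality constraint $W^T W = I_m$; the sparsity-promoting objective actually plays no direct role in this particular statement. First I would suppose that some solution $W^\star$ to~\eqref{eq:problem transition 3} has strictly fewer than $m$ non-zero rows, say $k < m$ non-zero rows indexed by a set $S \subset \{1,\dots,d\}$ with $|S|=k$.

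Next, I would exploit the block structure induced by $S$. Writing $W^\star_S \in \mathbb{R}^{k \times m}$ for the submatrix formed by the non-zero rows, the zero rows contribute nothing to $W^{\star T} W^\star$, so
\begin{equation*}
    I_m \;=\; W^{\star T} W^\star \;=\; W^{\star T}_{S} W^\star_S.
\end{equation*}
Since $W^\star_S$ has only $k$ rows, $\operatorname{rank}(W^\star_S) \le k < m$, and hence $\operatorname{rank}(W^{\star T}_S W^\star_S) \le k < m$. This contradicts $\operatorname{rank}(I_m) = m$, proving that any feasible $W^\star$, and in particular any minimizer of~\eqref{eq:problem transition 3}, must have at least $m$ non-zero rows.

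Because the argument is a clean rank/linear-independence observation, I expect no real obstacle — the only subtlety is to make explicit that a row being ``zero'' means the corresponding standard-basis component of every column of $W^\star$ vanishes, so that those rows are genuinely removable from the product $W^{\star T}W^\star$. Once that identification is made, the contradiction is immediate. It is worth remarking, as part of the write-up, that this theorem is conceptually important even though its proof is short: it guarantees that the sparse projection $W$ can never collapse to selecting fewer than $m$ features, which is precisely what is needed for the downstream feature-selection task to be well-posed when reducing to $m$ dimensions.
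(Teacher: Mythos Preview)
Your argument is correct and is essentially the same as the paper's: both rely solely on the constraint $W^T W = I_m$ to force $\operatorname{rank}(W)=m$, which immediately implies at least $m$ non-zero rows. The paper states this rank observation directly rather than via contradiction, but the content is identical.
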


\begin{proof}
 According to the constraint of problem~\eqref{eq:problem transition 3},
 any feasible solution $W \in \mathbb{R}^{d \times m}$ should satisfy $W^TW = I_m$.
 Since $I_m \in \mathbb{R}^{m \times m}$, the rank of $W$ is $m$.
 Therefore,  $W$ contains at least $m$ non-zero rows.
\end{proof}

The Lagrangian function of problem~\eqref{eq:problem transition 3} is
\begin{equation}\label{eq:Lagrangian function}
\begin{split}
    \mathcal{L}(W,\Lambda) = & -\operatorname{Tr}(W^T S_t W)+\gamma \sum_{i=1}^{d} \left(w^{iT}w^{i}+\epsilon\right)^\frac{p}{2} \\
  & + \operatorname{Tr}(\Lambda(W^T W-I))
\end{split}
\end{equation}
where $\Lambda$ is the Lagrangian multiplier.
We take the derivative of Eq.~\eqref{eq:Lagrangian function} with respect to $W$,
and set its value equal to zero.
Then, we can get
\begin{equation}\label{eq:the derivative of Lagrangian function}
\begin{split}
    \frac{\partial \mathcal{L}(W,\Lambda)}{\partial W}  = & -S_t W+\gamma GW + W \Lambda = 0
\end{split}
\end{equation}
where $G \in \mathbb{R}^{d \times d}$ is a diagonal matrix, and the $i$-th diagonal element is defined as
\begin{equation}\label{eq:G definition}
\begin{split}
    g_{ii}=\frac{p}{2} \left( w^{iT}w^{i}+\epsilon \right)^\frac{p-2}{2}
\end{split}
\end{equation}

It is worth noting that $G$ still depends on $W$.
That is, $W$ cannot be directly calculated from Eq.~\eqref{eq:the derivative of Lagrangian function}.
Thus, we utilize the following alternate optimization method to  calculate $W$, $G$ iteratively.

\textbf{Fix $G$ update $W$}.

When $G$ is fixed, it is easily to prove that solving Eq.~\eqref{eq:the derivative of Lagrangian function} is equivalent to solving
\begin{equation}\label{eq:update W}
\begin{split}
    & \min_{W^T W=I} -\operatorname{Tr}(W^T S_t W)+\gamma \operatorname{Tr}(W^T G W)
\end{split}
\end{equation}

The optimal $W$ of Eq.~\eqref{eq:update W} is formed by the $m$ eigenvectors of $(-S_t+\gamma G)$,
corresponding to the $m$ smallest eigenvalues.

\textbf{Fix $W$ update $G$}.

When $W$ is fixed, we can easily calculate $G$ by Eq.~\eqref{eq:G definition}.

Based on the above analysis,
the optimization algorithm to solve problem~\eqref{eq:problem transition 3} is summarized in Algorithm~\ref{algorithm:SPCAFS}.

\begin{algorithm}[h]
\caption{The algorithm to solve the problem~\eqref{eq:problem transition 3}.}\label{algorithm:SPCAFS}
\begin{algorithmic}[1]
\REQUIRE Data matrix $X \in \mathbb{R}^{n \times d}$, reduced dimension $m$,
regularization parameter $\gamma$, a sufficiently small constant $\epsilon$. \\
\ENSURE $h$ features of the data set. \\
\STATE Initialize $S_t= X^THX$ and $G=I$.
\REPEAT
\STATE  Update $W \in \mathbb{R}^{d \times m}$. The columns of $W$ are the $m$ eigenvectors of $(-S_t+\gamma G)$, corresponding to the $m$ smallest eigenvalues.
\STATE  Update $G$. The $i$-th element of $G$ is defined by Eq.~\eqref{eq:G definition}.
\UNTIL{converge}
\STATE Sort $\|w^i\|_2$ ($i = 1, 2, \dots, d$) in descending order,
and select the top $h$ ranked features.
\end{algorithmic}
\end{algorithm}

\section{Discussion }\label{sec:Discussion}

\subsection{Convergence analysis}\label{sec:Convergence analysis}

The convergence of Algorithm~\ref{algorithm:SPCAFS}
can guarantee that we can find a locally optimal solution of problem~\eqref{eq:problem transition 3}.
Obviously, the converged solution satisfies KKT condition.
To prove the convergence, we first introduce the following lemma.
Please refer to~\cite{SOGFS2019} for the detailed proof.

\begin{lemma}\label{lemma nie}
When $0<p\leq 1$, for any positive real number $u$ and $v$, the following inequality holds:
\begin{equation}\label{eq:lemma 1}
\begin{split}
    u^\frac{p}{2}- \frac{p}{2} \frac{u}{v^\frac{2-p}{2}}
    \leq v^\frac{p}{2}-\frac{p}{2} \frac{v}{v^\frac{2-p}{2}}
\end{split}
\end{equation}
\end{lemma}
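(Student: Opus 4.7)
The plan is to recognize the inequality as the tangent-line bound that characterizes concavity of the function $f(x) = x^{p/2}$ on $(0,\infty)$. Observe that the right-hand side $v^{p/2} - \tfrac{p}{2}\, v \cdot v^{(p-2)/2}$ is exactly $f(v) - f'(v)\cdot v$, and that $\tfrac{p}{2}\, u \cdot v^{(p-2)/2} = f'(v)\cdot u$. So the claimed inequality is equivalent to
\begin{equation*}
  f(u) \le f(v) + f'(v)\,(u - v),
\end{equation*}
which is precisely the statement that the graph of a concave function lies below each of its tangent lines.

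First I would verify that $f(x) = x^{p/2}$ is indeed concave on $(0,\infty)$ when $0 < p \le 1$. A direct computation gives $f''(x) = \tfrac{p}{2}\bigl(\tfrac{p}{2}-1\bigr) x^{p/2 - 2}$, and since $0 < p/2 \le 1/2 < 1$, the factor $\tfrac{p}{2}-1$ is strictly negative while $x^{p/2-2} > 0$, so $f'' \le 0$ on $(0,\infty)$. Next I would compute $f'(v) = \tfrac{p}{2}\, v^{p/2 - 1} = \tfrac{p}{2}/v^{(2-p)/2}$ and apply the tangent inequality at the point $v$ evaluated at $u$. Finally, rearranging the resulting inequality by moving the term $\tfrac{p}{2}\,u/v^{(2-p)/2}$ to the left-hand side yields exactly~\eqref{eq:lemma 1}.

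There is no real obstacle here; the only subtle point is the sign-tracking around the exponent $(p-2)/2$ versus $(2-p)/2$ and making sure that the assumption $0 < p \le 1$ enters precisely through concavity (if $p > 2$ the function would be convex and the inequality would reverse). Everything else is bookkeeping, and since $u$ and $v$ are assumed positive, $f$ and $f'$ are well defined and there are no boundary issues to worry about.
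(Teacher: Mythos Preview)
Your argument is correct: recognizing the inequality as the tangent-line (first-order) characterization of concavity of $f(x)=x^{p/2}$ on $(0,\infty)$ is the standard and cleanest way to see it, and your sign-tracking and derivative computations are accurate. Note that the paper itself does not supply a proof for this lemma but simply refers the reader to~\cite{SOGFS2019}; the proof there is essentially the same concavity argument you give, so there is no discrepancy to discuss.
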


%Theorem~\ref{theorem:1} can be used to prove the convergence of Algorithm~\ref{algorithm:solve problem}.

\begin{theorem}\label{theorem:2}
 When we calculate $W$ according to Algorithm~\ref{algorithm:SPCAFS},
 updated $W$ will decrease the objective value of problem~\eqref{eq:problem transition 3} until converge.
\end{theorem}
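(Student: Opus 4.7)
The plan is to prove monotone decrease of the objective in problem~\eqref{eq:problem transition 3} by a majorization--minimization argument whose core is to use Lemma~\ref{lemma nie} as a tangent upper bound on the concave penalty $x^{p/2}$. Let $W_t$ denote the current iterate, $G_t$ the diagonal matrix built from $W_t$ via Eq.~\eqref{eq:G definition}, and $W_{t+1}$ the updated matrix produced by Step~3 of Algorithm~\ref{algorithm:SPCAFS} with $G$ frozen at $G_t$. Write $J(W)$ for the true objective of problem~\eqref{eq:problem transition 3} and
\begin{equation*}
Q(W;G_t) \;=\; -\operatorname{Tr}(W^T S_t W) + \gamma \operatorname{Tr}(W^T G_t W)
\end{equation*}
for the surrogate of Eq.~\eqref{eq:update W}; both $W_t$ and $W_{t+1}$ are feasible for $W^T W = I_m$.

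First I would exploit the optimality of $W_{t+1}$ on the surrogate: since $W_{t+1}$ minimizes $Q(\cdot;G_t)$ over the Stiefel constraint and $W_t$ is feasible, we immediately have $Q(W_{t+1};G_t) \leq Q(W_t;G_t)$. Expanding $\operatorname{Tr}(W^T G_t W) = \sum_i g_{ii}^{(t)}\, w^{iT} w^i$, this controls the change of $-\operatorname{Tr}(W^T S_t W)$ by a $g_{ii}^{(t)}$-weighted change of the squared row norms. Second I would apply Lemma~\ref{lemma nie} row by row with $u = w_{t+1}^{iT} w_{t+1}^i + \epsilon$ and $v = w_t^{iT} w_t^i + \epsilon$. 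Because $\tfrac{p}{2}/v^{(2-p)/2}$ is exactly $g_{ii}^{(t)}$ by Eq.~\eqref{eq:G definition}, the lemma rearranges into the termwise bound
\begin{equation*}
\bigl(w_{t+1}^{iT} w_{t+1}^i + \epsilon\bigr)^{p/2} - \bigl(w_t^{iT} w_t^i + \epsilon\bigr)^{p/2} \;\leq\; g_{ii}^{(t)}\bigl(w_{t+1}^{iT} w_{t+1}^i - w_t^{iT} w_t^i\bigr),
\end{equation*}
where the $\epsilon$ offsets cancel on the right-hand side. Summing over $i$, multiplying by $\gamma$, and adding the surrogate inequality cancels the $g_{ii}^{(t)}$-weighted terms and collapses the remainder to $J(W_{t+1}) \leq J(W_t)$.

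Since $J$ is bounded below on the feasible set (the penalty is nonnegative and $\operatorname{Tr}(W^T S_t W)$ is bounded above by the sum of the $m$ largest eigenvalues of $S_t$), the monotone sequence $\{J(W_t)\}$ converges, which is the claim. The only nontrivial step is the Lemma~\ref{lemma nie} invocation: it must be applied on the shifted variables $w^{iT} w^i + \epsilon$ so that the $\epsilon$ offsets cancel cleanly after rearrangement, and the factor $p/2$ that Eq.~\eqref{eq:G definition} absorbs into $g_{ii}^{(t)}$ must be matched on both sides of the tangent bound. Once that bookkeeping is done correctly the remaining telescoping between the surrogate inequality and the termwise bound is routine.
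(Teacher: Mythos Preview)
Your proposal is correct and follows essentially the same majorization--minimization argument as the paper: the surrogate optimality of Step~3 gives $Q(W_{t+1};G_t)\le Q(W_t;G_t)$, and Lemma~\ref{lemma nie} applied row-wise with $u=w_{t+1}^{iT}w_{t+1}^i+\epsilon$, $v=w_t^{iT}w_t^i+\epsilon$ converts the weighted quadratic decrease into a decrease of the true penalty, exactly as in Eqs.~\eqref{eq:proof theorem 1}--\eqref{eq:proof theorem 7}. Your bookkeeping (rearranging the lemma first so the $\epsilon$ offsets cancel directly) is a cosmetic variant of the paper's device of adding the $\epsilon$-term \eqref{eq:proof theorem 2} to both sides, and your explicit boundedness-below remark is a welcome addition the paper leaves implicit.
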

\begin{proof}
  Supposing the current updated $W$ is denoted by $\hat{W}$,
we can easily derive the following inequality
\begin{equation}\label{eq:proof theorem 1}
\begin{split}
  & -\operatorname{Tr}(\hat{W}^T S_t \hat{W})+\gamma \operatorname{Tr}(\hat{W}^T G \hat{W}) \leq \\
  & -\operatorname{Tr}(W^T S_t W)+\gamma \operatorname{Tr}(W^T G W)
\end{split}
\end{equation}

We add the following item to the both sides of Eq.~\eqref{eq:proof theorem 1}.
\begin{equation}\label{eq:proof theorem 2}
\begin{split}
 \gamma \sum_{i=1}^{d} \frac{p}{2} \frac{\epsilon}{ \left( w^{iT} w^i+\epsilon \right)^\frac{2-p}{2}}
\end{split}
\end{equation}

Then, we can get
\begin{equation}\label{eq:proof theorem 3}
\begin{split}
&-\operatorname{Tr}(\hat{W}^T S_t\hat{W})+\gamma\operatorname{Tr}(\hat{W}^TG\hat{W})
  + \sum_{i=1}^{d} \frac{\gamma p\epsilon}{ 2\left( w^{iT} w^i+\epsilon \right)^\frac{2-p}{2}} \leq \\
&-\operatorname{Tr}(W^T S_t W)+\gamma \operatorname{Tr}(W^T G W)
   +\sum_{i=1}^{d} \frac{\gamma p\epsilon}{ 2\left( w^{iT} w^i+\epsilon \right)^\frac{2-p}{2}}
\end{split}
\end{equation}

According to Eq.~\eqref{eq:G definition}, we substitute $G$ into Eq.~\eqref{eq:proof theorem 3}:
\begin{equation}\label{eq:proof theorem 4}
\begin{split}
 &  -\operatorname{Tr}(\hat{W}^T S_t \hat{W})+
 \gamma \sum_{i=1}^{d} \frac{p}{2} \frac{\hat{w}^{iT} \hat{w}^i+\epsilon}{ \left( w^{iT} w^i+\epsilon \right)^\frac{2-p}{2} } \leq \\
 & -\operatorname{Tr}(W^T S_t W)+
 \gamma \sum_{i=1}^{d} \frac{p}{2} \frac{w^{iT} w^i+\epsilon}{ \left( w^{iT} w^i+\epsilon \right)^\frac{2-p}{2}}
\end{split}
\end{equation}

Setting $u=\hat{w}^{iT} \hat{w}^i+\epsilon$, $v=w^{iT} w^i+\epsilon$,
according to Lemma~\ref{lemma nie}, we have the inequality:
\begin{equation}\label{eq:proof theorem 5}
\begin{split}
   & \left( \hat{w}^{iT} \hat{w}^i+\epsilon \right)^\frac{p}{2}
     -\frac{p}{2} \frac{\hat{w}^{iT} \hat{w}^i+\epsilon}{ \left( w^{iT} w^i+\epsilon \right)^\frac{2-p}{2}} \leq \\
   & \left( w^{iT} w^i+\epsilon \right)^\frac{p}{2} -\frac{p}{2} \frac{w^{iT} w^i+\epsilon}{\left( w^{iT} w^i+\epsilon \right)^\frac{2-p}{2}}
\end{split}
\end{equation}

Due to $\gamma>0$, we can further get
\begin{equation}\label{eq:proof theorem 6}
\begin{split}
   & \gamma \sum_{i=1}^{d} \left( \hat{w}^{iT} \hat{w}^i+\epsilon \right)^\frac{p}{2}
     -\gamma \sum_{i=1}^{d} \frac{p}{2} \frac{\hat{w}^{iT} \hat{w}^i+\epsilon}{ \left( w^{iT} w^i+\epsilon \right)^\frac{2-p}{2}} \leq \\
   & \gamma \sum_{i=1}^{d} \left( w^{iT} w^i+\epsilon \right)^\frac{p}{2} -
   \gamma \sum_{i=1}^{d}\frac{p}{2} \frac{w^{iT} w^i+\epsilon}{\left( w^{iT} w^i+\epsilon \right)^\frac{2-p}{2}}
\end{split}
\end{equation}

For Eq.~\eqref{eq:proof theorem 4} and Eq.~\eqref{eq:proof theorem 6}, we add the left parts of the two inequalities.
For the right parts,
we perform the similar operation.
Thus, we have
\begin{equation}\label{eq:proof theorem 7}
\begin{split}
   &-\operatorname{Tr}(\hat{W}^T S_t \hat{W})+
   \gamma \sum_{i=1}^{d} \left( \hat{w}^{iT} \hat{w}^i+\epsilon \right)^\frac{p}{2} \leq \\
   &-\operatorname{Tr}(W^T S_t W)+
   \gamma \sum_{i=1}^{d} \left( w^{iT} w^i+\epsilon \right)^\frac{p}{2}
\end{split}
\end{equation}
By comparing Eq.~\eqref{eq:proof theorem 7} with problem~\eqref{eq:problem transition 3},
we can infer Theorem~\ref{theorem:2} holds.
\end{proof}

\subsection{Computational complexity analysis}

Since normalization is a prerequisite for all data mining tasks,
we don't count its computational cost into feature selection methods.
The computational complexity of Algorithm~\ref{algorithm:SPCAFS} can be decomposed into the following aspects:
\begin{itemize}
  \item We need $O(d^2n)$ to initialize $S_t$ and $G$,
        based on the normalized data.
  \item For one iteration, we need $O(d^3)$ to update $W$ by performing eigen-decomposition of $(-S_t+\gamma G)$.
    \item For one iteration, we need $O(dm)$ to update $G$ according to Eq.~\eqref{eq:G definition}.
    \item We need $O(dm)$ to calculate $\|w^i\|_2$ ($i = 1, 2, \dots, d$) and $O(d{\rm{log}}d)$ to complete the sorting.
\end{itemize}

Thus,
the overall computational complexity is $O(d^2n+d^3t)$,
where $t$ is the number of iterations
of Algorithm~\ref{algorithm:SPCAFS}.
%Thus, the computational complexity of SPCAFS
%is $O(d^2n+d^3)$ in one iteration.
Note that, Algorithm~\ref{algorithm:SPCAFS} is efficient
and always converges within 30 iterations in our experiments.
We further compare its computational complexity in one iteration
with that of other competing methods.
In Table~\ref{tab: Comparison of computational complexity},
apart from some introduced notations,
$c$ is the number of clusters,
$p$ is the number of neighbours in graph construction,
%$m$ denotes the reduced dimension,
$h$ is the number of selected features.
We can conclude that:
\begin{itemize}
  \item For embedded methods,
   the computational complexity of sparse regression usually contains $d^3$, which is produced by inverse operation or eigen-decomposition.
  \item SPCAFS does not require the construction of a similarity matrix by KNN, which will need at least $O(dn^2)$.
  \item Of all the methods, only the computational complexity of SPCAFS is linear to $n$.
  It indicates that the computational complexity of SPCAFS will not
expand rapidly, as the number of samples increases.

%
%  The computational complexity of all the methods except SPCAFS contains $n^2$.
%  The computational complexity of SPCAFS is linear to $n$,
%which indicates that its computational complexity will not
%expand rapidly, as the number of samples increases.
%

%while
%which means it can be applied in large-scale data sets.
%For all these methods except SPCAFS,
%the computational complexity contains $n^2$.

\end{itemize}
% which means it can be applied in large-scale data sets
%Thus, they are badly scaled in large-scale data sets.

\begin{table}[htbp]
%\centering
\caption{Comparison of computational complexity.}\label{tab: Comparison of computational complexity}
\centerline{
\scalebox{1}{
\begin{tabular}{p{2.5cm}<{\centering}p{5cm}<{\centering}}
\hline
      \textbf{Methods}  & \textbf{Computational complexity} \\
    \hline
     LapScore &   $O(dn^2+d{\rm{log_2}}d)$ \\
     MCFS &  $O(d^3+n^2m+d^2n)$ \\
     UDFS &  $O(d^3+n^2c)$ \\
     EUFS &  $O(dn^2+nc^2)$ \\
     DGUFS & $O(n^3+dn)$  \\
     SOGFS &   $O(d^3+n^2p+n^2c)$ \\
     RNE &   $O(dn^2+d^2n+d^2h)$ \\
     SPCAFS &  $O(d^2n+d^3)$ \\
    \hline
\end{tabular}}
}
\end{table}

\section{Experimental evaluation}\label{sec:experiments}
\subsection{Experimental setup }

\subsubsection{Experimental environment and data sets }
Hardware is a workstation with 3.8 GHz CPU and 16 GB RAM.
The experimental environment is Windows 64-bit Operating System,
running Matlab R2018a.
Our experiments were executed on 6 publicly available data sets,
including four image data sets PalmData25, Imm40, PIE and AR,
two bioinformatics data sets SRBCTML and LEUML.
More information of the data sets is shown in Table~\ref{tab:data sets}.

\begin{table}[htbp]
%\centering
\caption{The details of the experimental data sets.}\label{tab:data sets}
\centerline{
\scalebox{1}{
\begin{tabular}{p{2cm}p{1.6cm}<{\centering}p{1.6cm}<{\centering}p{1.6cm}<{\centering}}
\hline
     \textbf{Data sets} &  \textbf{\# of Feature}
     &	\textbf{\# of Instance} &	\textbf{\# of Class} \\
    \hline
     PalmData25 &256 &2000  &100 \\
     Imm40 &1024 &240  &40 \\
     PIE &1024 &1166  &53 \\
     AR &2200 &2600  &100 \\
     SRBCTML &2308 &83  &4 \\
     LEUML &3571 &72  &2 \\
   %  TOX-171 &5748 &171  &4 \\
    \hline
\end{tabular}}
}
\end{table}

\subsubsection{Comparision methods and parameters setting }

To verify the effectiveness of the proposed method,
we compared it with several state-of-the-art methods in the field of unsupervised feature selection,
such as LapScore, MCFS, UDFS, EUFS,
DGUFS, SOGFS and RNE.
To get the baseline for analysis, all features are selected as a special case of feature selection.

To ensure that the experiments are as fair as possible,
we adopt the same strategy to set parameters for all the unsupervised feature selection methods.
For LapScore, MCFS, UDFS, EUFS, DGUFS, SOGFS and RNE,
we set the neighborhood size to be 5.
For EUFS, SOGFS and SPCAFS,
the reduced dimension is fixed as $m=c-1$.
We tune all the parameters by grid search strategy from  \{$10^{-6},10^{-4},10^{-2},10^{0},10^{2},10^{4},10^{6}$ \}.
Without loss of generality,
we set $p=1$ in $\ell_{2,p}$-norm regularization of SPCAFS.

\subsubsection{Evaluation metrics }

To verify the validity of SPCAFS,
we execute $K$-means clustering by inputting the results of different unsupervised  feature selection methods.
Clustering accuracy (ACC) and Normalized Mutual Information (NMI) are utilized to evaluate the effectiveness of feature selection indirectly.
ACC is defined as~\cite{Zhu2019accNMI}
\begin{equation}\label{eq:Acc}
\begin{split}
    ACC =  \frac{\sum_{i=1}^n \delta(c_i,map(l_i))}{n}
\end{split}
\end{equation}
where $n$ is the number of data points,
$c_i$ is the given cluster label,
$l_i$ is the obtained cluster label,
map(·) is the permutation mapping function that maps each obtained cluster label $l_i$ to the equivalent label from the data set.
The best mapping can be found by using Kuhn–Munkres algorithm~\cite{Strehl2002Cluster}.
$\delta$ is a function defined as
\begin{equation}\label{eq:delta function}
\begin{split}
    & \delta(a,b)=  \left\{ {\begin{array}{l}
  1,  \quad if \ a=b \\
  0,  \quad otherwise \\
 \end{array}} \right.  \\
\end{split}
\end{equation}

NMI is defined as~\cite{Zhu2019accNMI}
\begin{equation}\label{eq:NMI}
\begin{split}
    NMI =  \frac{MI(C, C'))}{max(H(C), H(C'))}
\end{split}
\end{equation}
where $C$ is the set of clusters obtained from the ground truth
and $C'$ is the set of clusters computed by a clustering algorithm.
$MI(C, C)$ is the mutual information metric.
$H(C)$, $H(C')$ are the entropies of $C$ and $C'$ respectively.

For each unsupervised  feature selection method,
the best result of $K$-means clustering with the optimal parameters is recorded.
Since the result of $K$-means clustering depends on initialization,
we repeated $K$-means clustering 20 times for all the methods,
and report their average results.

\subsection{Clustering results analysis with selected features}

Without loss of generality,
we set the number of selected features as \{$10,20,30,40,50,60,70,80,90,100$ \} for each data set.
The experimental results of ACC, NMI are illustrated in Fig.~\ref{fig: Acc results},  Fig.~\ref{fig: NMI results} respectively.
We can get the following conclusions:

\begin{itemize}
  \item[1.] Feature selection of SPCAFS is effective.
Compared with the baseline (AllFea),
both ACC and NMI of SPCAFS have been significantly improved
on almost all these data sets.
On PalmData25 data set,
although ACC and NMI of SPCAFS are lower than the baseline, they are still higher than those of other methods.
As the number of selected features increases,
both ACC and NMI of SPCAFS gradually approach the baseline.

  \item[2.] In general, with the increase of the number of selected features, the curves of clustering results show a trend of rising first and then falling.
Because data sets from practical applications usually contain many redundancy features and a few discriminative features.
As the number of selected features increases,
some redundancy features are selected,
decreasing the clustering performance of feature selection methods.

 \item[3.] The performance of SPCAFS exceeds other competing methods on all these data sets.
In particular on Imm40 data set,
compared with the second best method MCFS,
SPCAFS has 5 percent improvement of ACC,
3 percent improvement of NMI on average.

\end{itemize}

\begin{figure*}[!htbp]
\centering
\subfloat[Imm40]{\includegraphics[width=0.32\textwidth ]{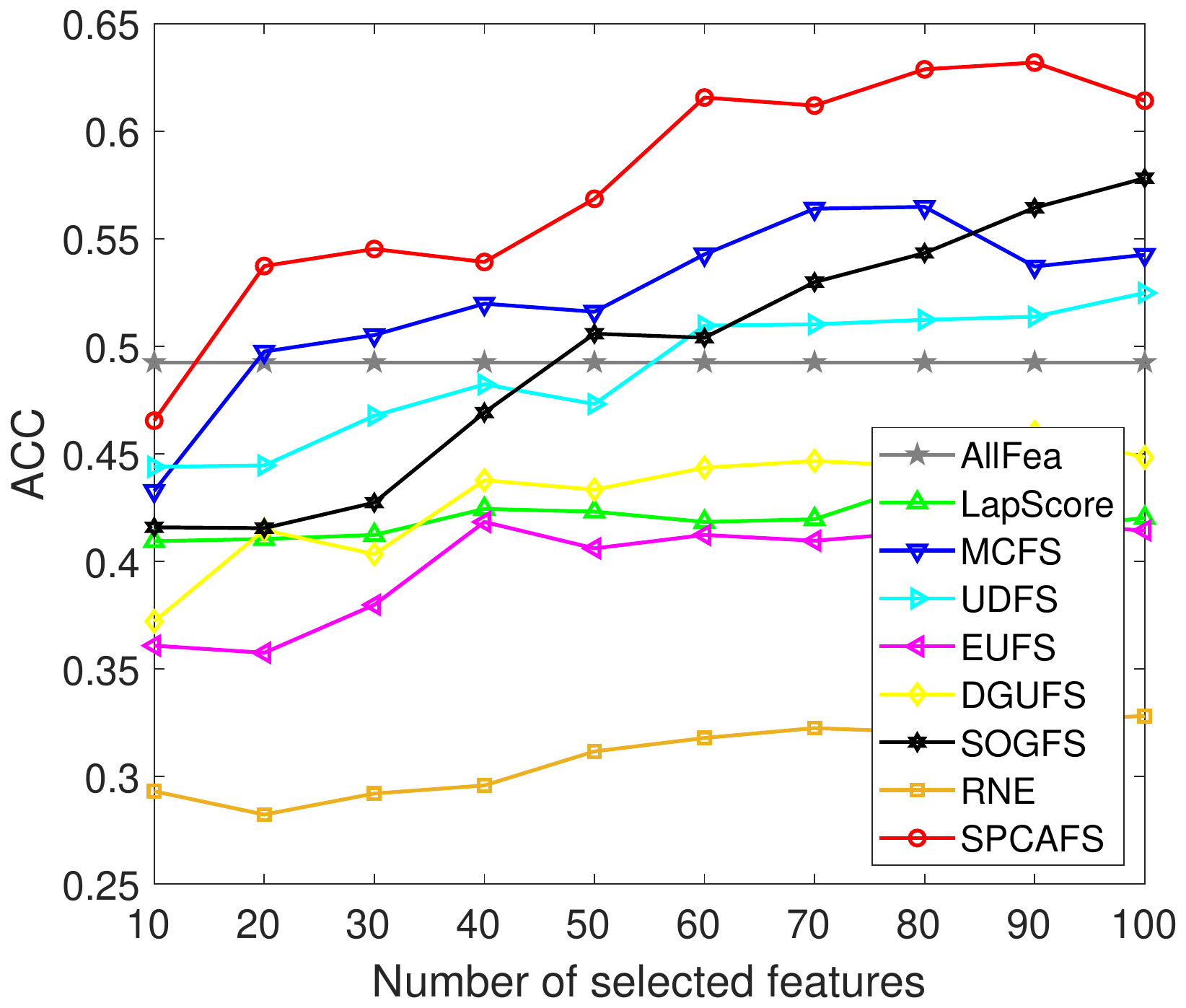}}
~~\subfloat[PIE]{\includegraphics[width=0.32\textwidth ]{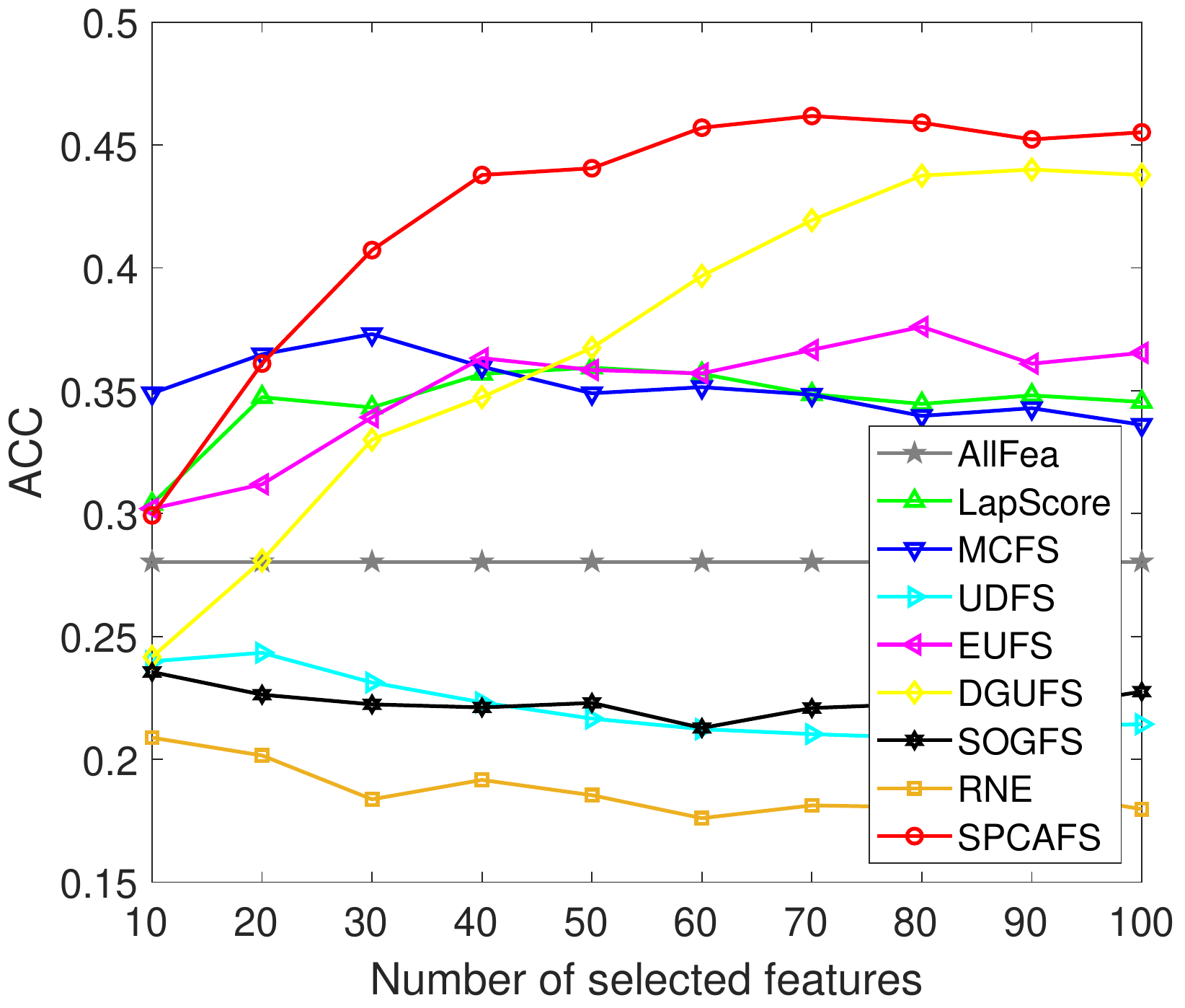}}
~~\subfloat[SRBCTML]{\includegraphics[width=0.32\textwidth ]{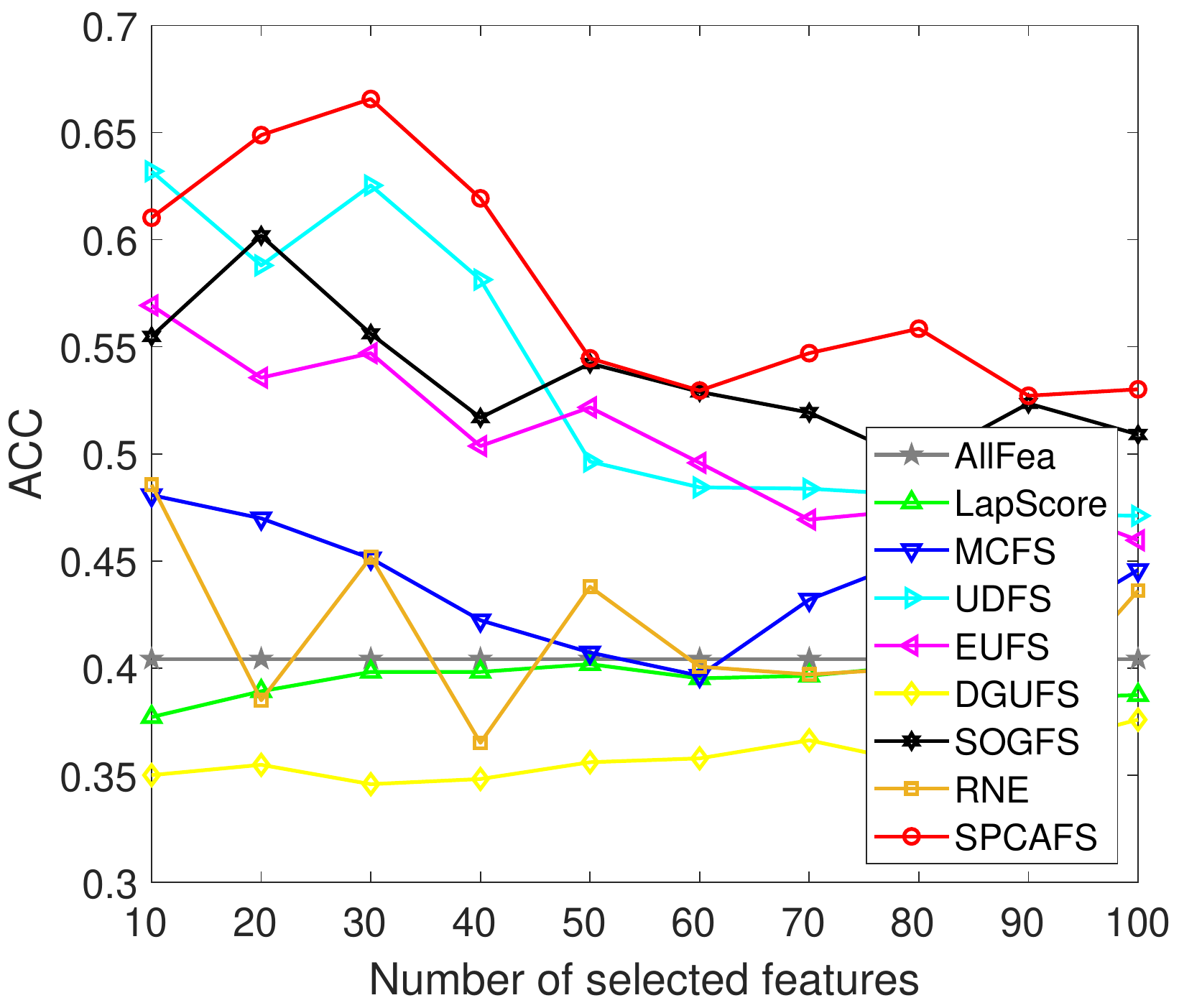}}\\
\subfloat[AR]{\includegraphics[width=0.32\textwidth ]{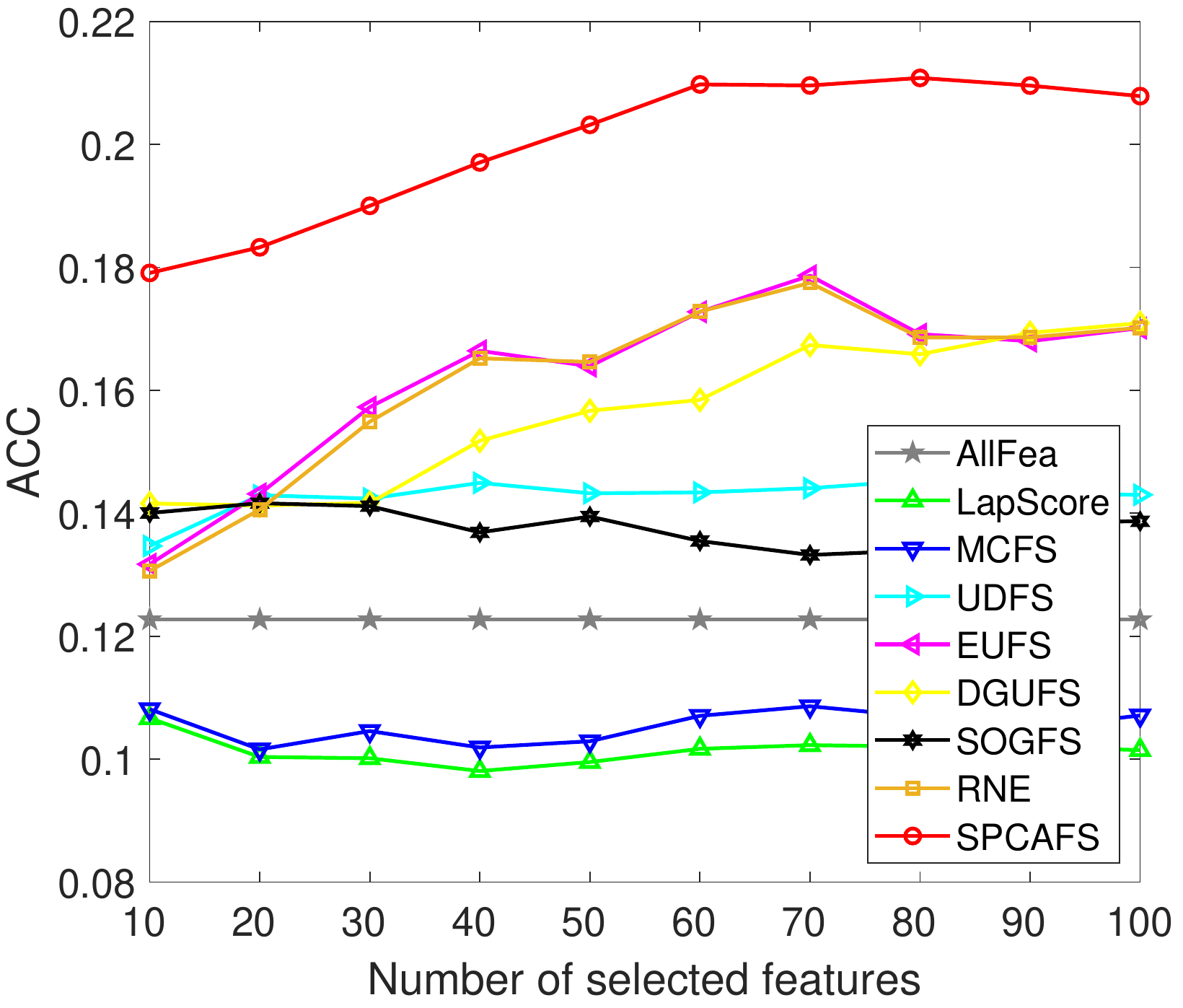}}
~~\subfloat[LEUML]{\includegraphics[width=0.32\textwidth ]{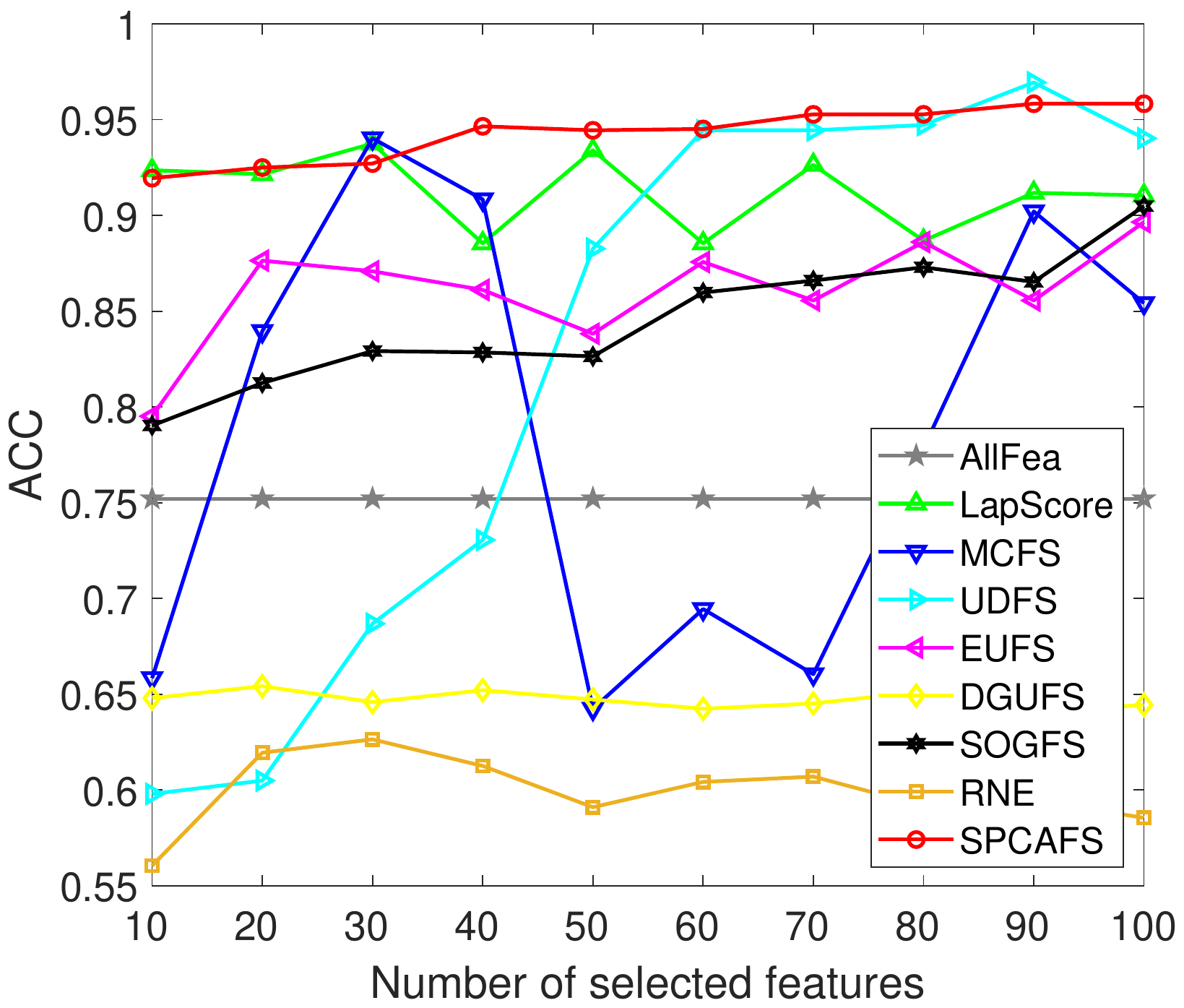}}
~~\subfloat[PalmData25]{\includegraphics[width=0.32\textwidth ]{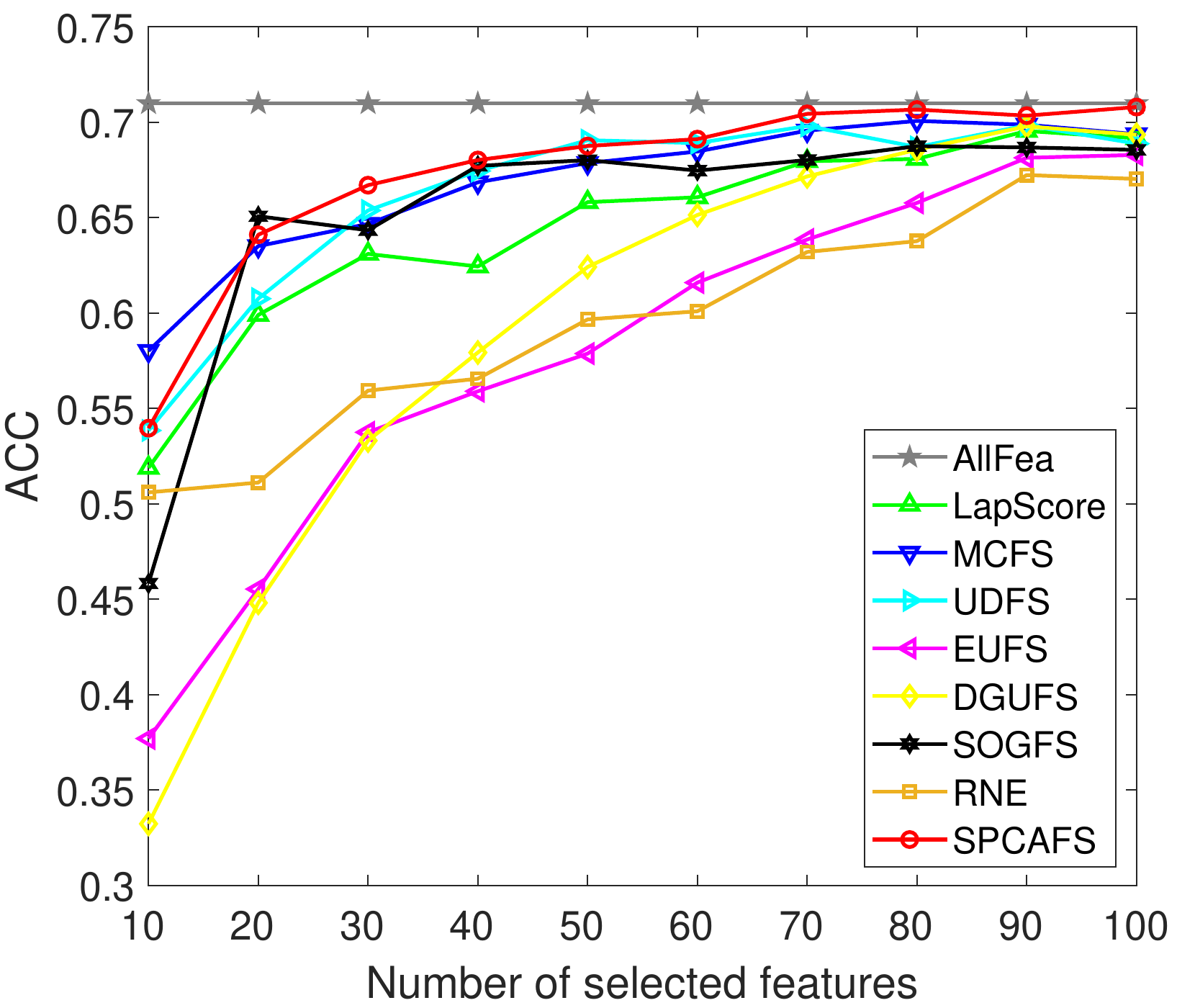}}
\caption{Clustering results (ACC) of different unsupervised feature selection methods.}
\label{fig: Acc results}
\end{figure*}

\begin{figure*}[!htbp]
\centering
\subfloat[Imm40]{\includegraphics[width=0.32\textwidth ]{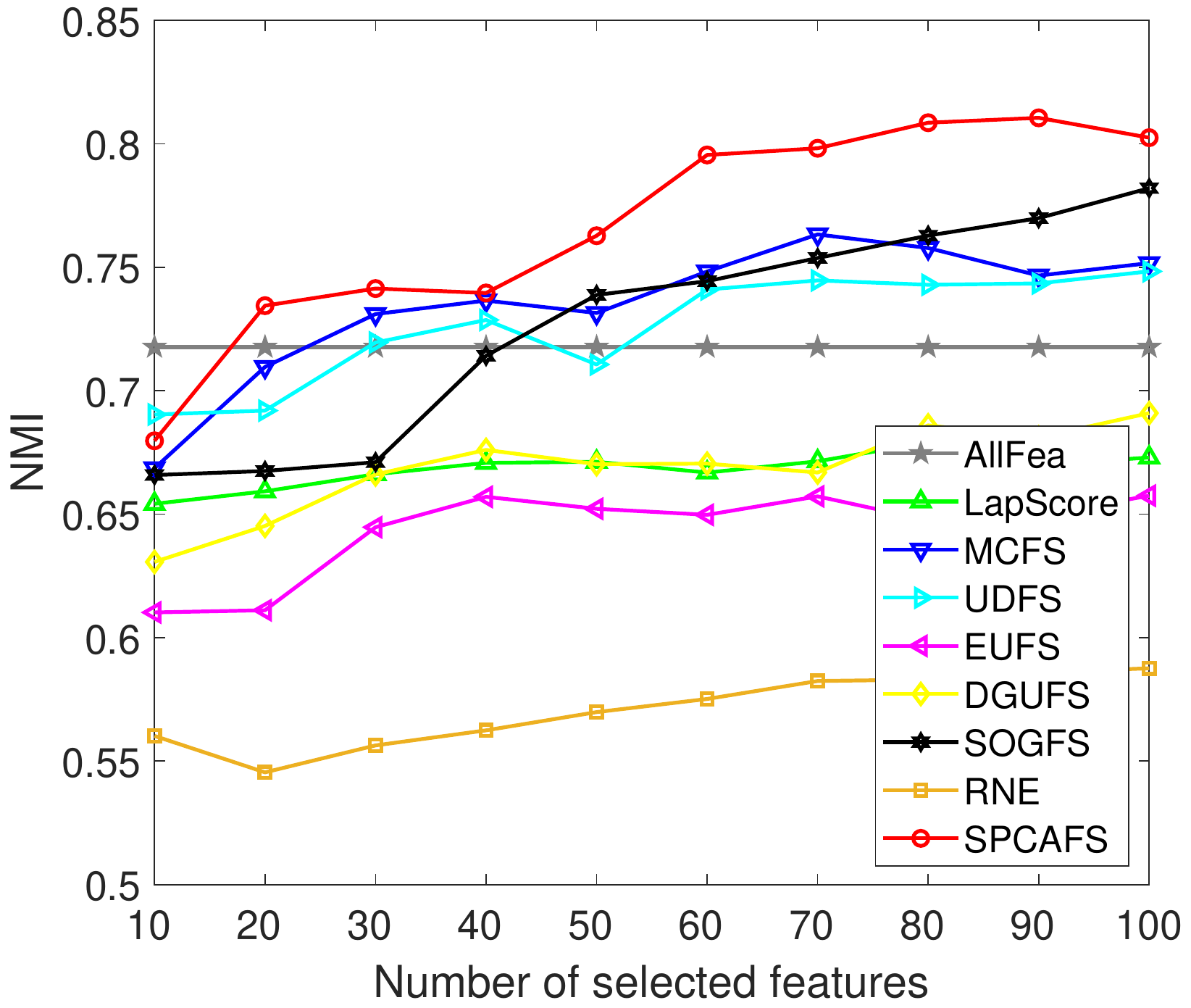}}
~~\subfloat[PIE]{\includegraphics[width=0.32\textwidth ]{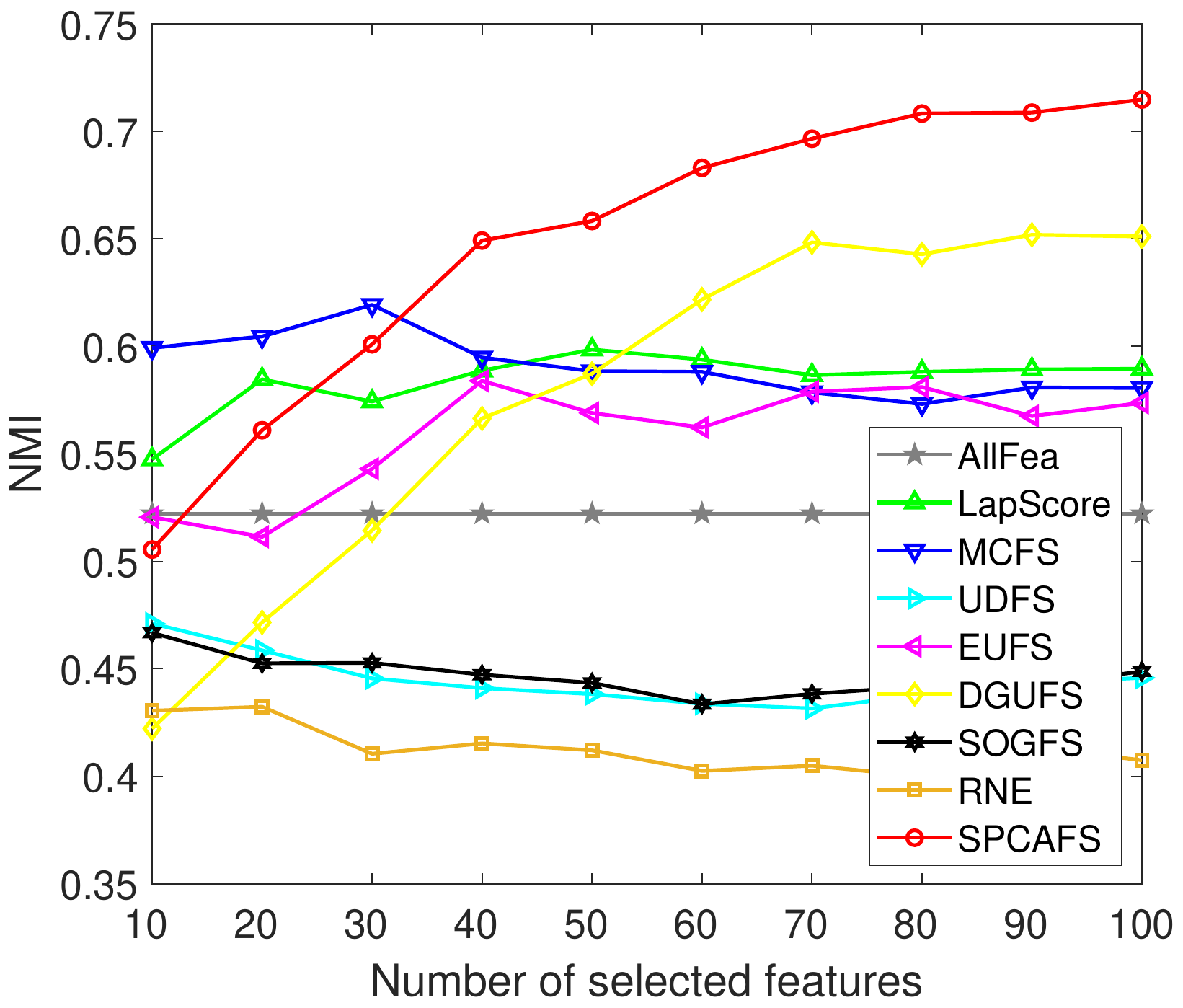}}
~~\subfloat[SRBCTML]{\includegraphics[width=0.32\textwidth ]{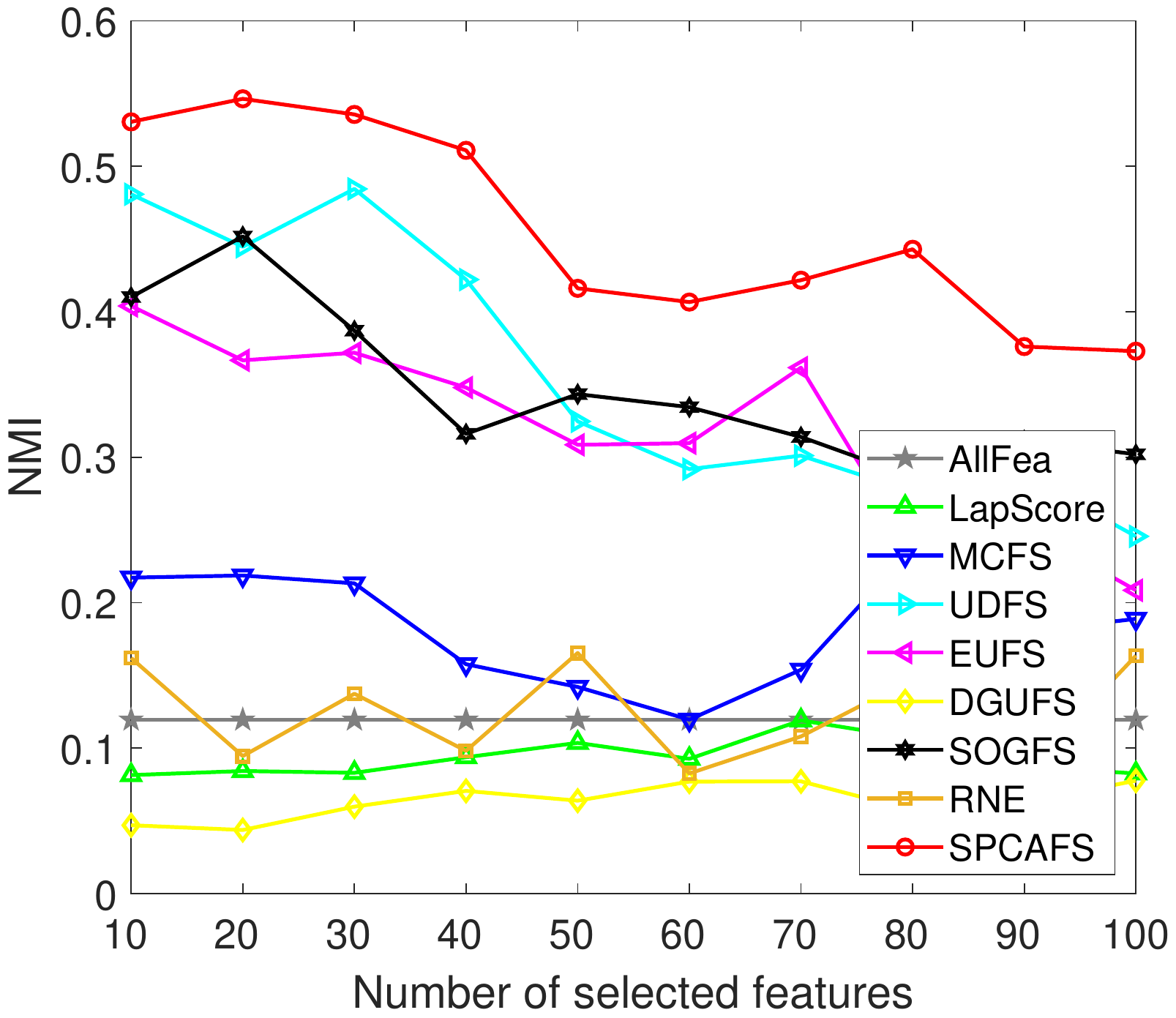}}\\
\subfloat[AR]{\includegraphics[width=0.32\textwidth ]{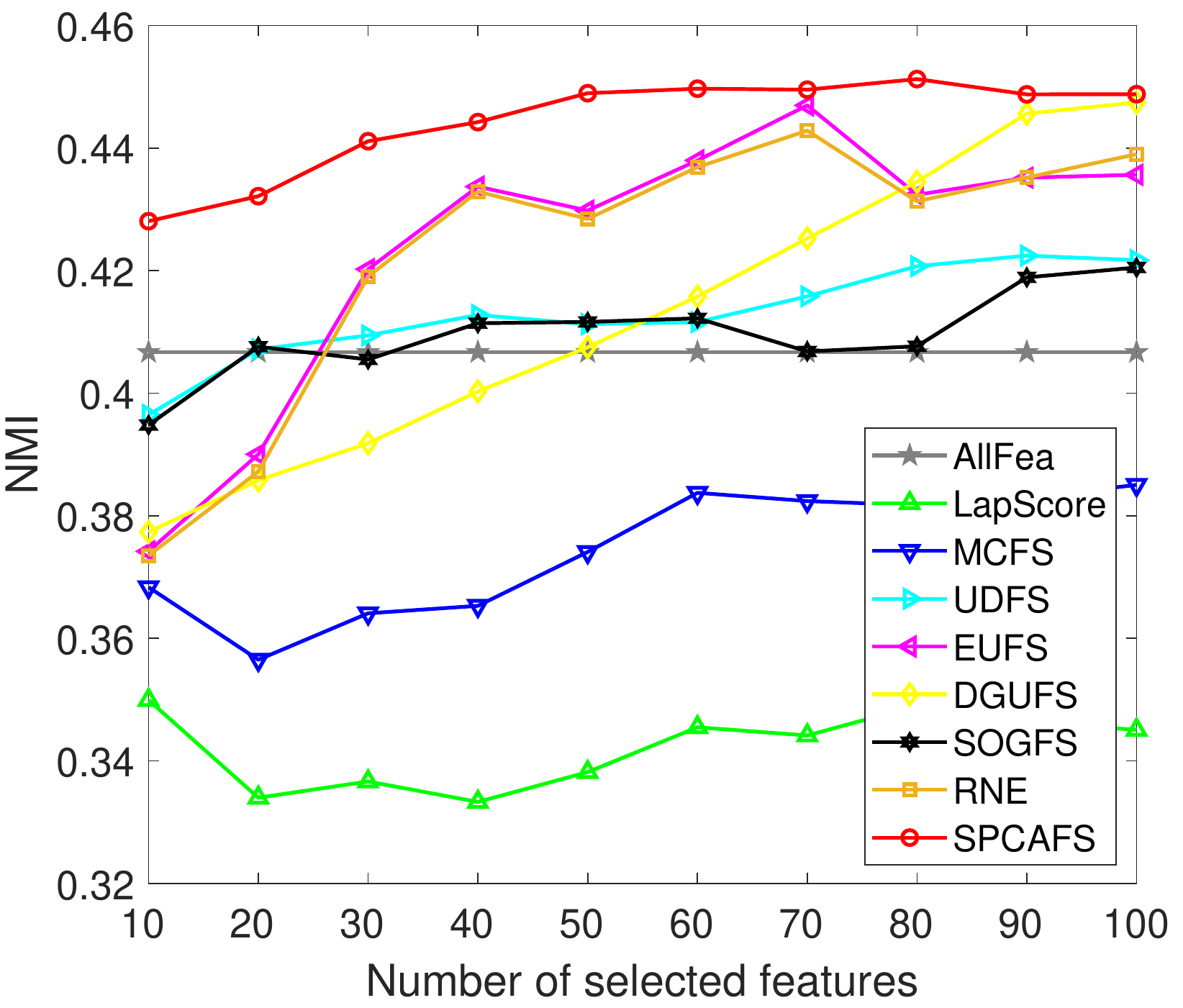}}
~~\subfloat[LEUML]{\includegraphics[width=0.32\textwidth ]{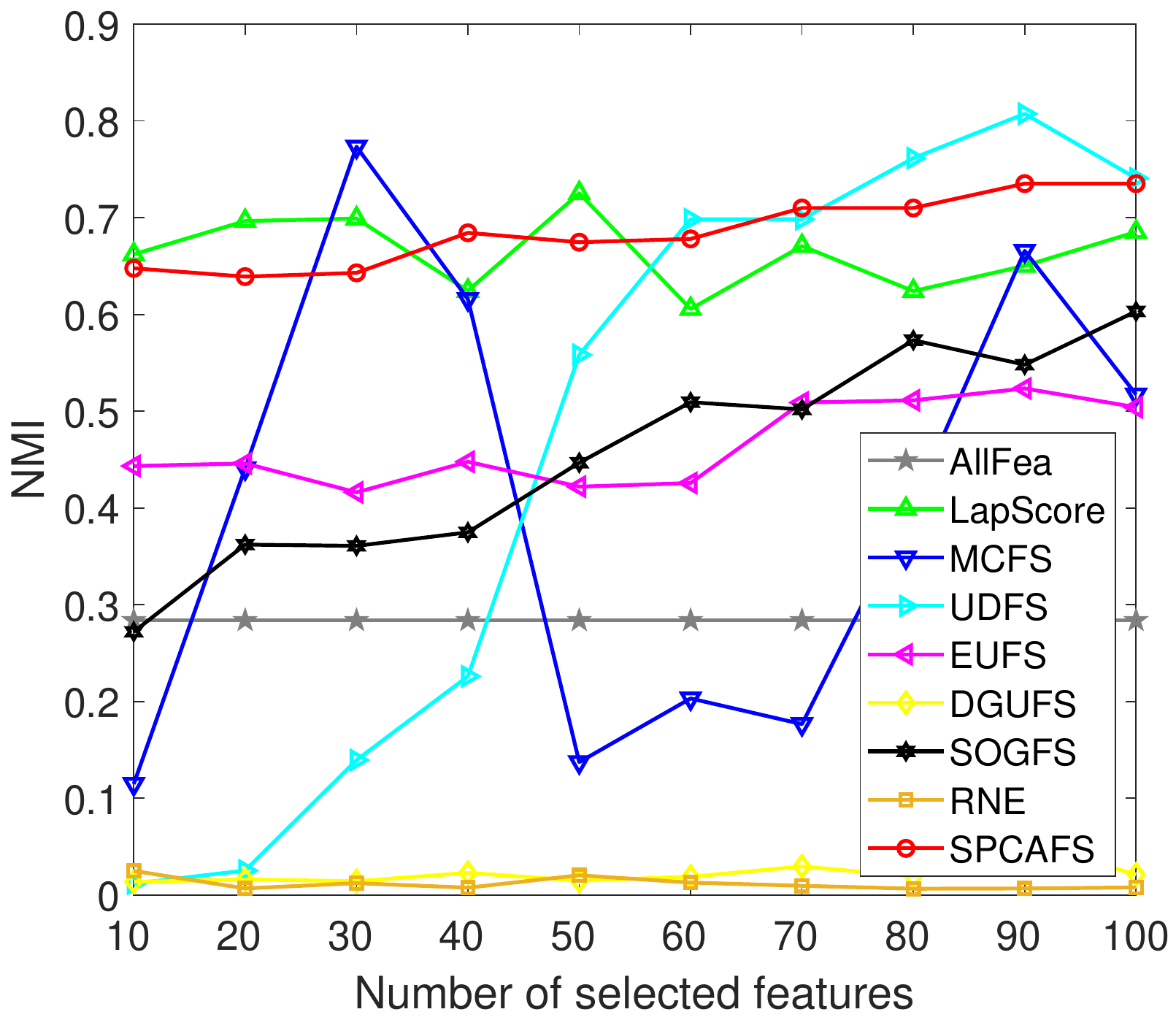}}
~~\subfloat[PalmData25]{\includegraphics[width=0.32\textwidth ]{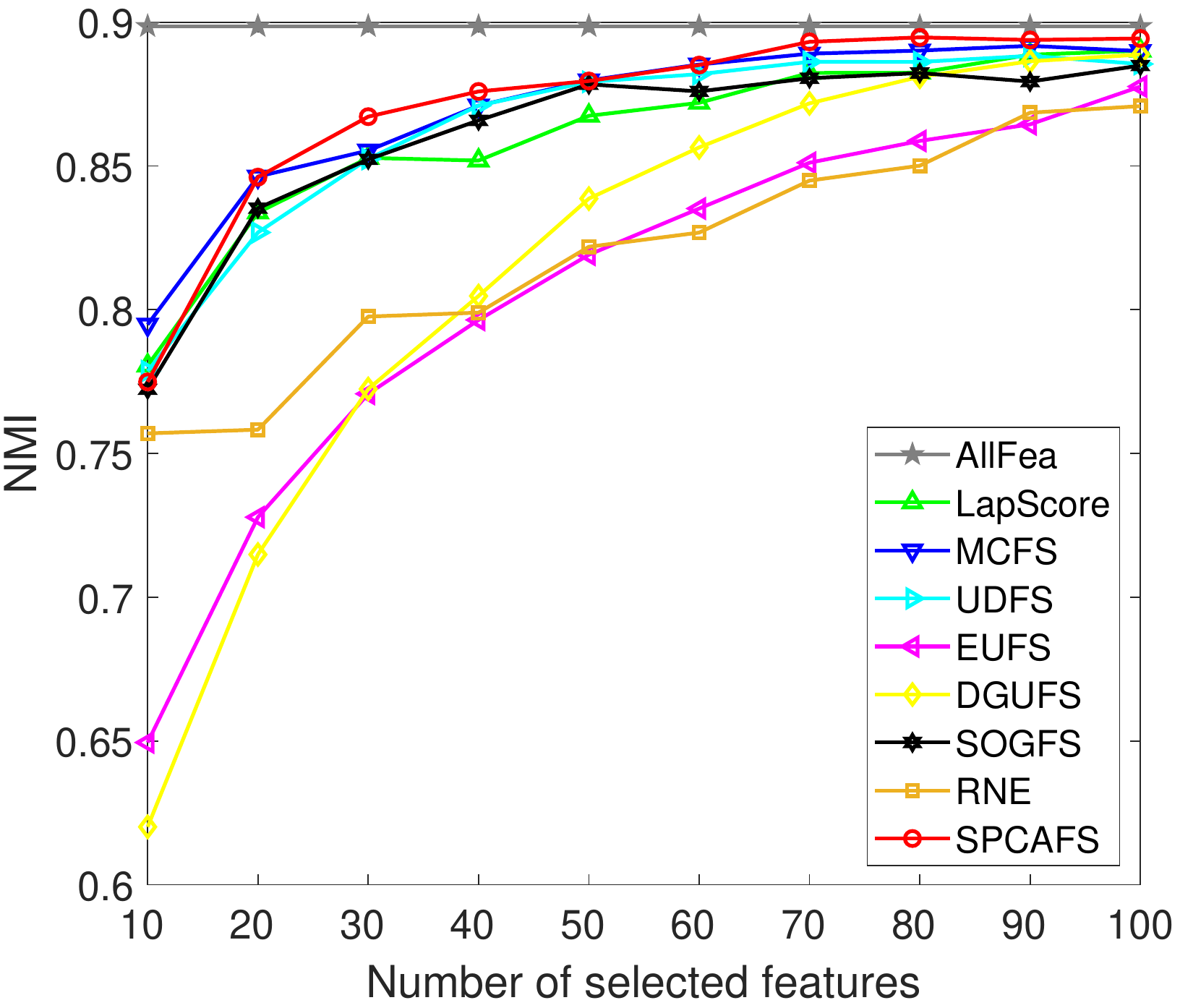}}
\caption{Clustering results (NMI) of different unsupervised feature selection methods.}
\label{fig: NMI results}
\end{figure*}

%\subsection{  Convergence and parameter sensitivity study }

\subsection{  Convergence study }

In Section~\ref{sec:Convergence analysis},
we have proven the convergence of Algorithm~\ref{algorithm:SPCAFS}.
We further study the speed of its convergence by experiments.
The convergence curves of the objective value are demostrated in Fig.~\ref{fig: Convergence curve}.
Due to space limitation,
we only show the results on four data sets.
We can see that the speed of convergence of Algorithm~\ref{algorithm:SPCAFS} is very fast,
which ensures the efficiency of SPCAFS.

\begin{figure}[!htbp]
\centering
\subfloat[PalmData25]{\includegraphics[width=0.24\textwidth ]{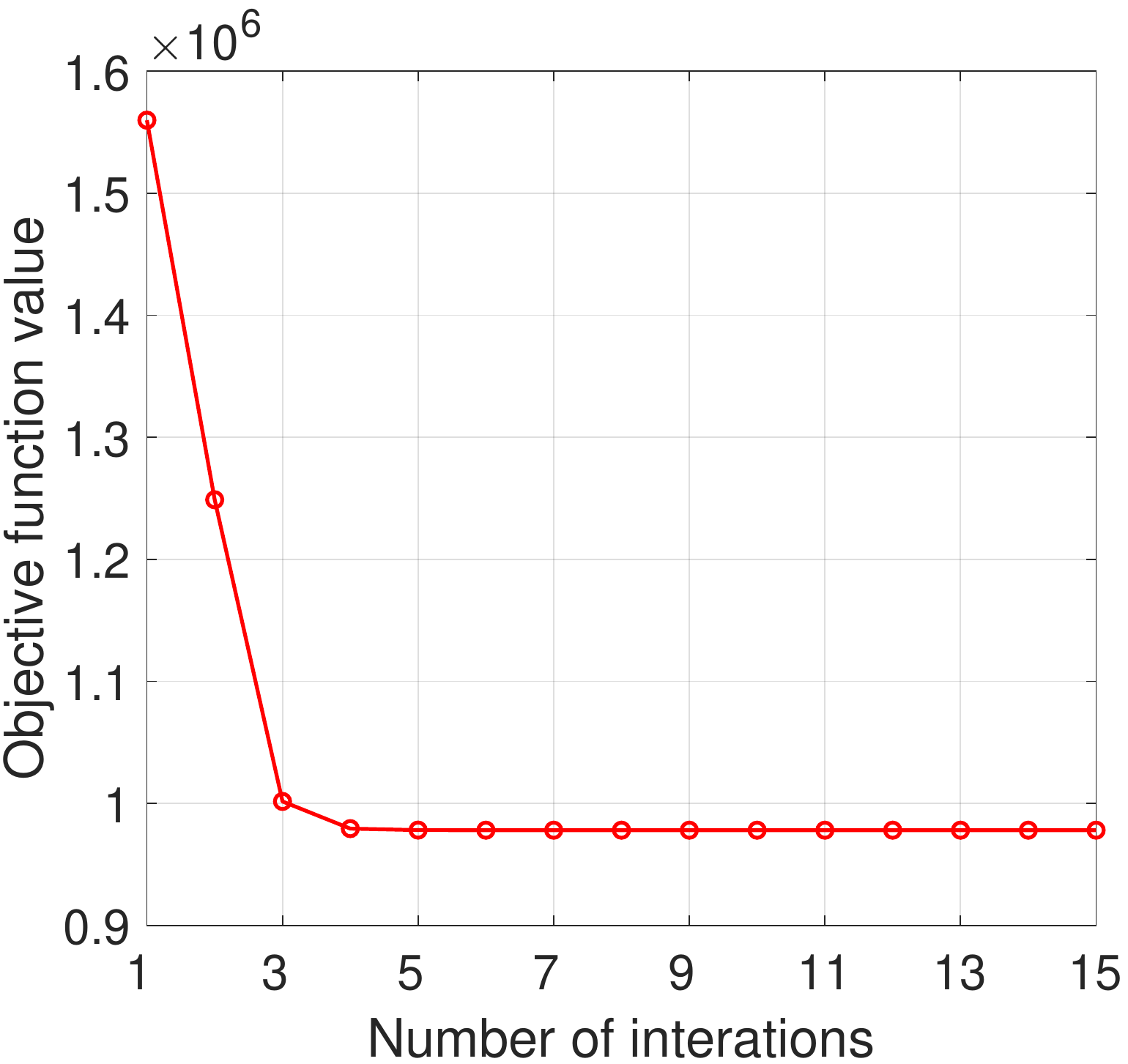}}
~\subfloat[Imm40]{\includegraphics[width=0.24\textwidth ]{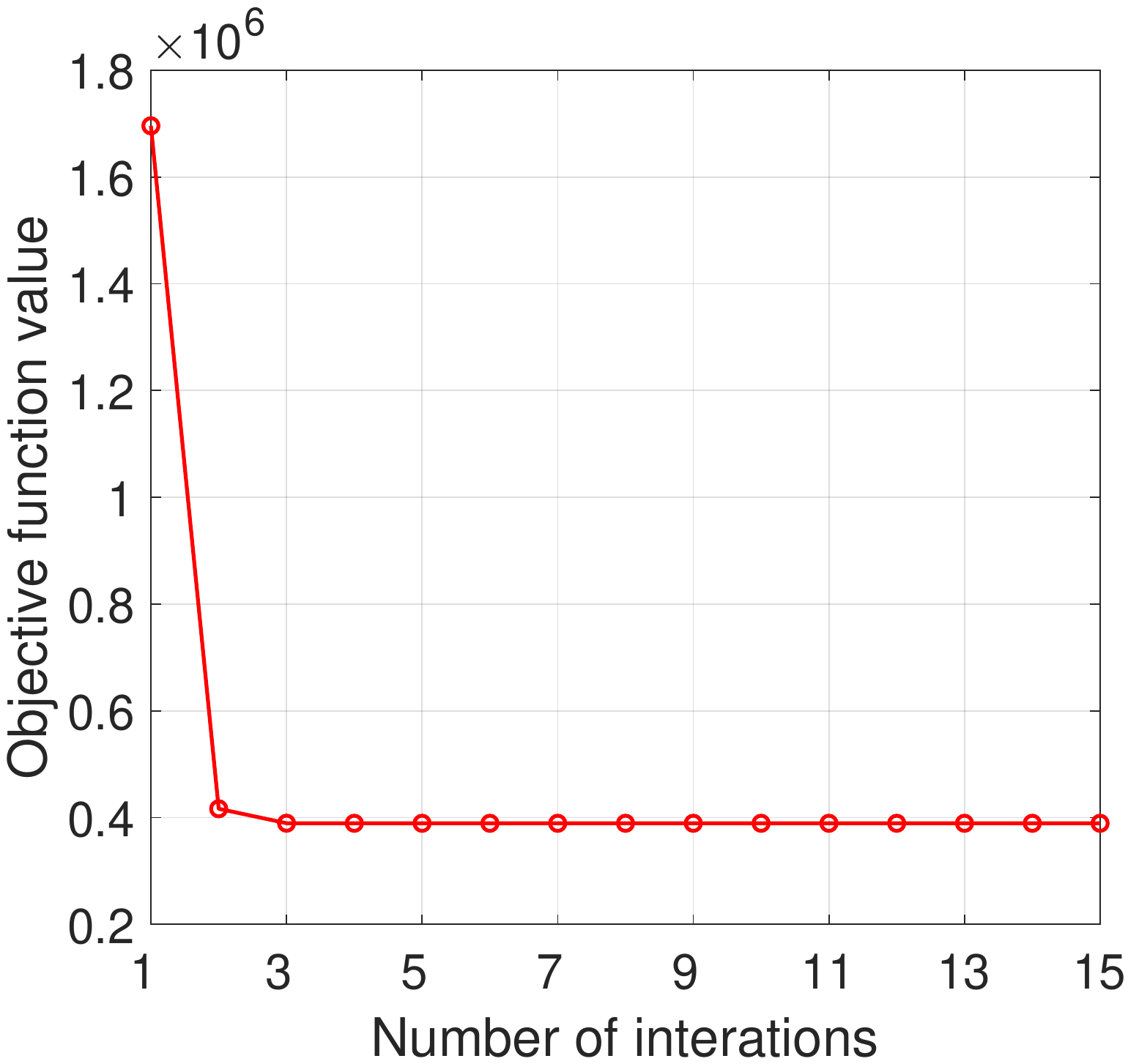}}\\
\subfloat[SRBCTML]{\includegraphics[width=0.24\textwidth ]{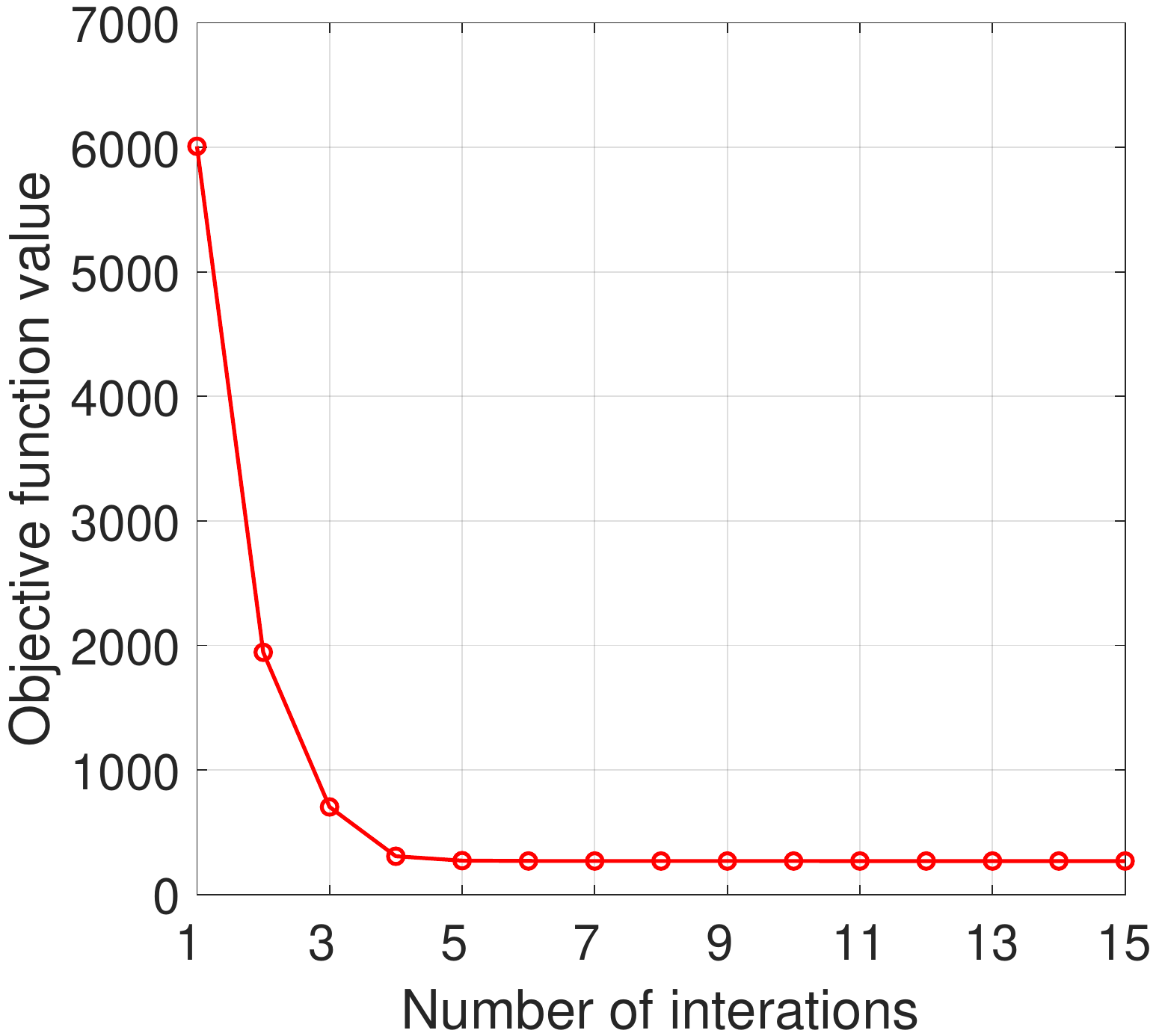}}
~\subfloat[LEUML]{\includegraphics[width=0.24\textwidth ]{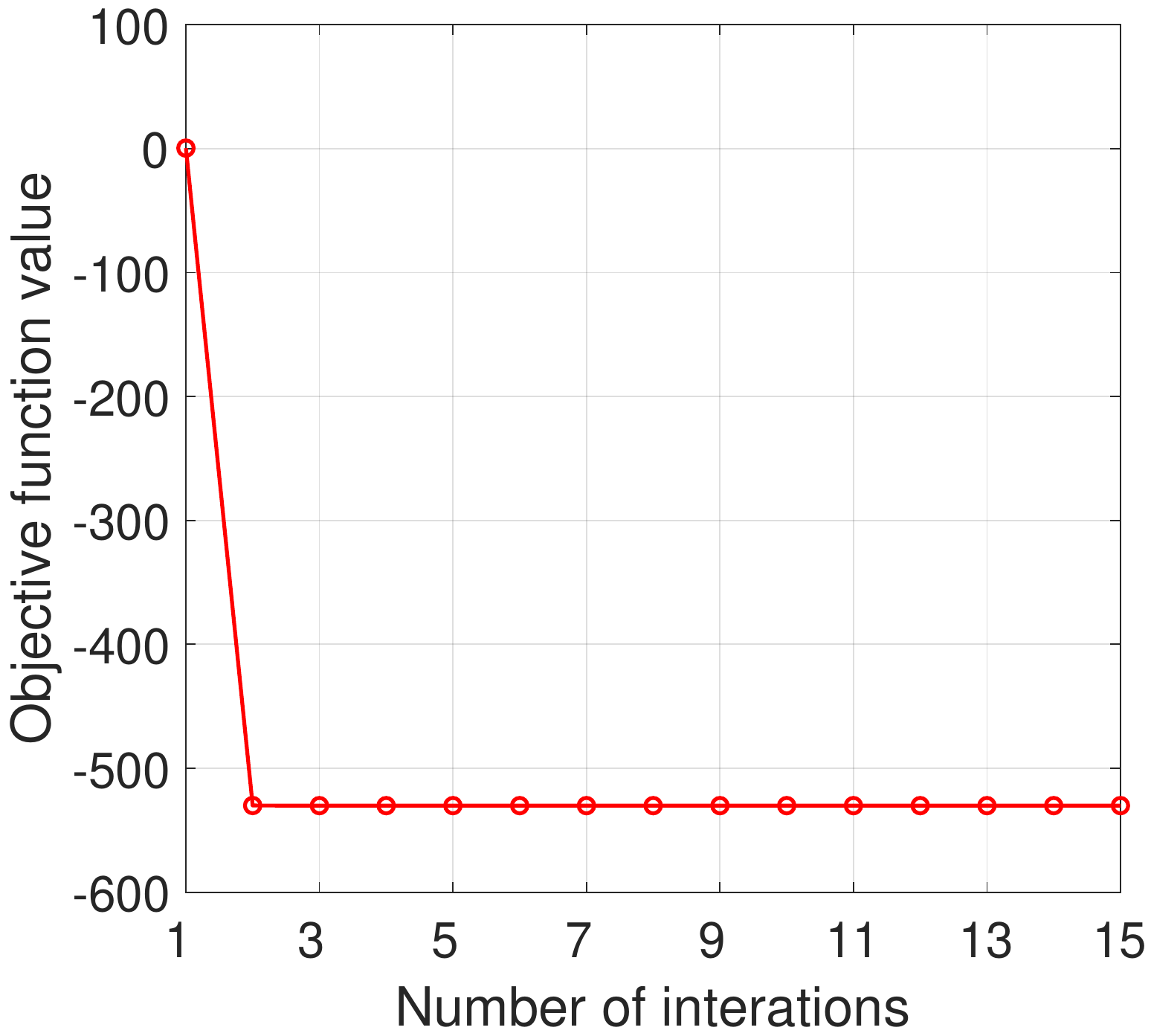}}
\caption{Convergence curve of SPCAFS on PalmData25, Imm40, SRBCTML and LEUML data sets.}
\label{fig: Convergence curve}
\end{figure}

\subsection{ Parameter sensitivity analysis}

\subsubsection{ Sensitivity analysis for the parameters $\gamma$, $m$ }

We further investigate the impact of parameters $\gamma$, $m$ on SPCAFS.
The experimental results on all these data sets are similar.
Due to space limitation,
we only present the experimental results on Imm40 data set.
We first adjust $\gamma$ by fixing $m=c-1$.
There are some small fluctuations in clustering performance under different $\gamma$,
as illustrated in Fig.~\ref{fig: Parameter sensitivity}(a)-(b).
Because $\gamma$ is used to control the row sparsity of projection matrix $W$,
its variation will affect the result of feature selection.
Then, we adjust $m$ by fixing $\gamma=10^4$.
When $m$ changes from 10 to 70,
the clustering performance does not change significantly,
as illustrated in Fig.~\ref{fig: Parameter sensitivity}(c)-(d).
The results indicate that SPCAFS is not sensitive to parameters $\gamma$ and $m$ with wide range,
and it can be used in practical applications.

\subsubsection{The effect of $p$ in $\ell_{2,p}$-Norm regularization}

In the above experiments, we set $p=1$
in $\ell_{2,p}$-norm regularization by default.
In this section,
we discuss the effect of $p$ on the results of feature selection for SPCAFS.
Without loss of generality,
we only show the experimental results on Imm40, SRBCTML data sets,
as illustrated in Fig.~\ref{fig: p analysis}.

\begin{figure}[!htbp]
\centering
\subfloat[ACC (m=39)]{\includegraphics[width=0.24\textwidth ]{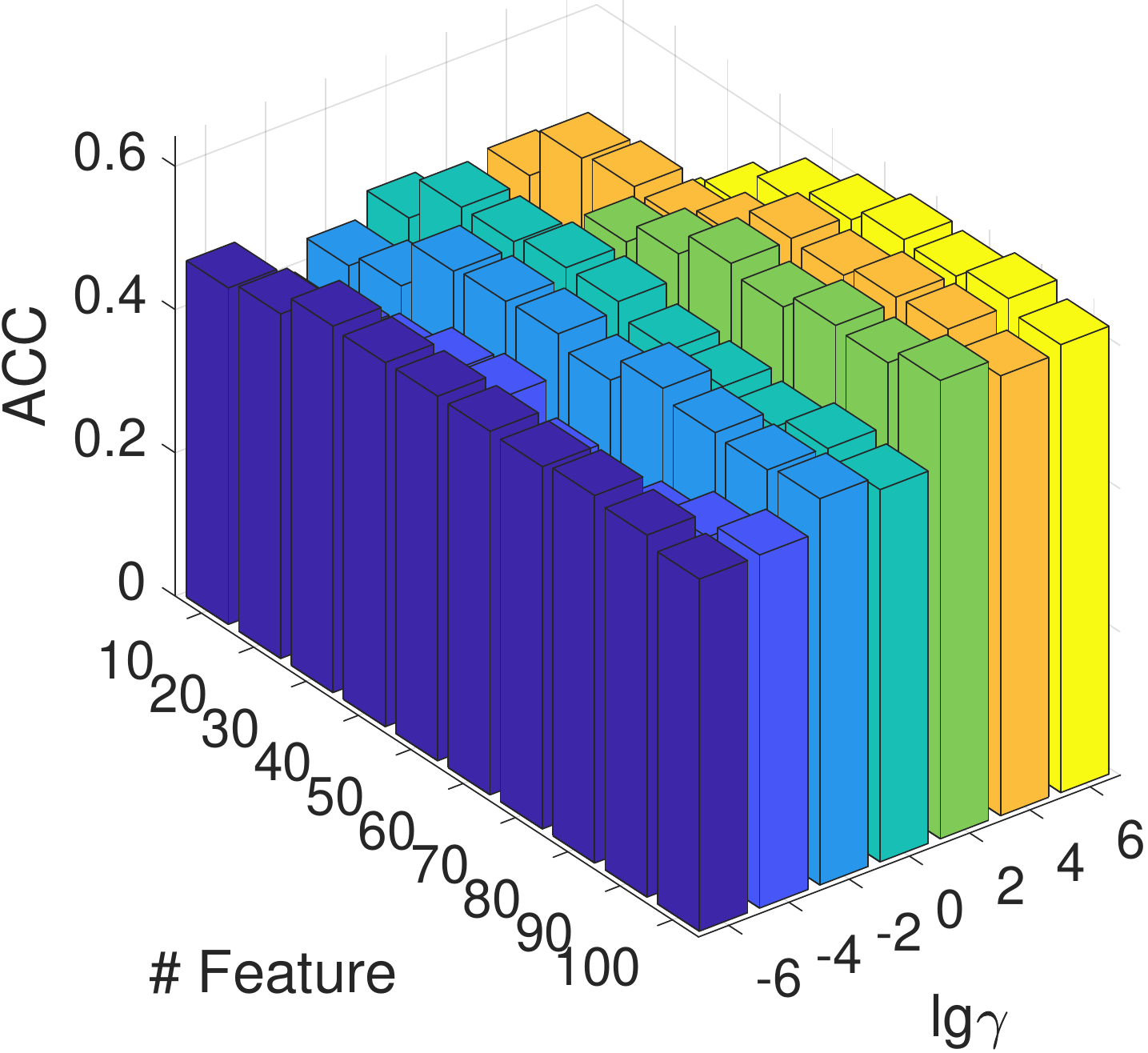}}
~\subfloat[NMI (m=39)]{\includegraphics[width=0.24\textwidth ]{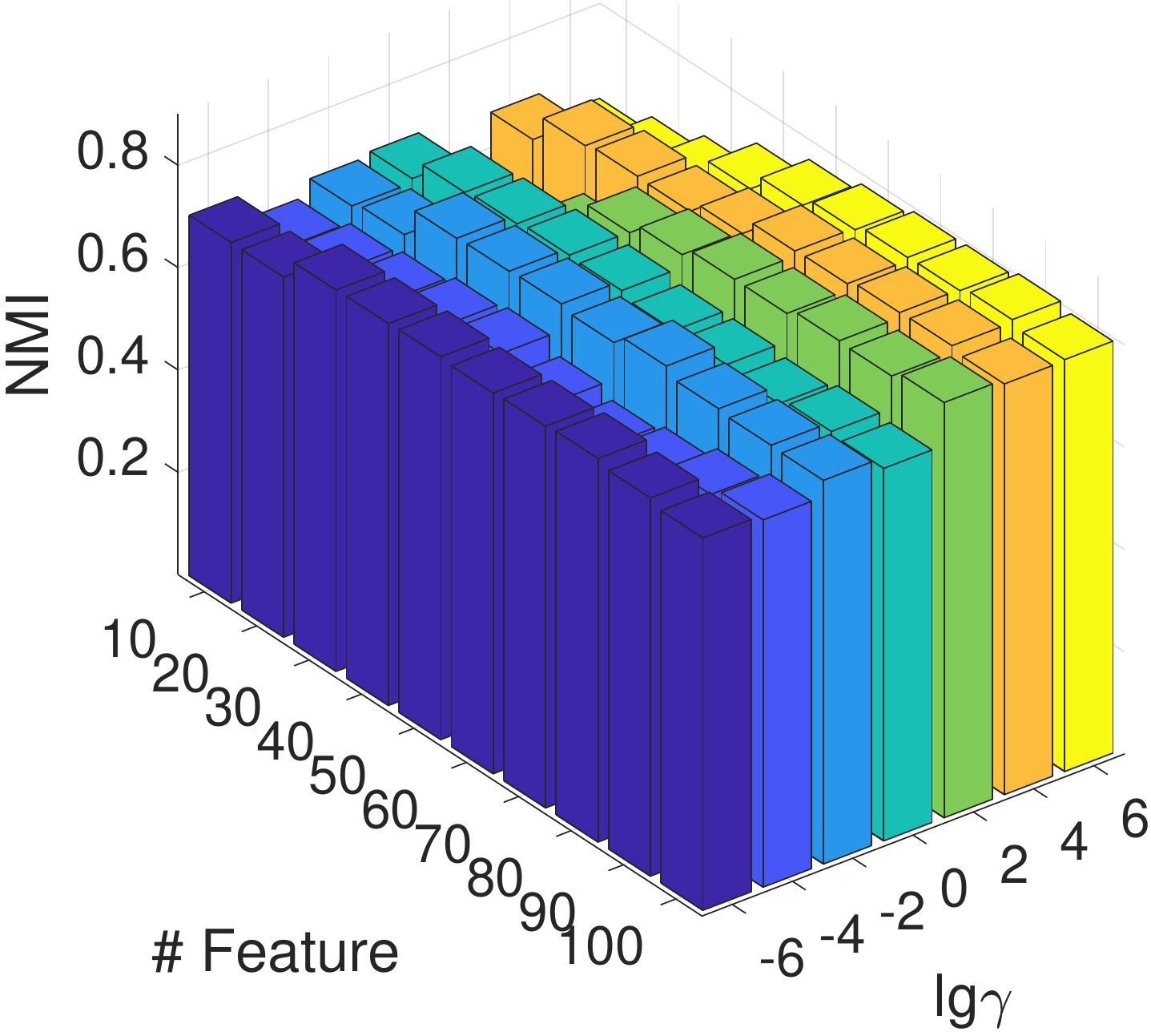}}\\
\subfloat[ACC ($\gamma=10^4$)]{\includegraphics[width=0.24\textwidth ]{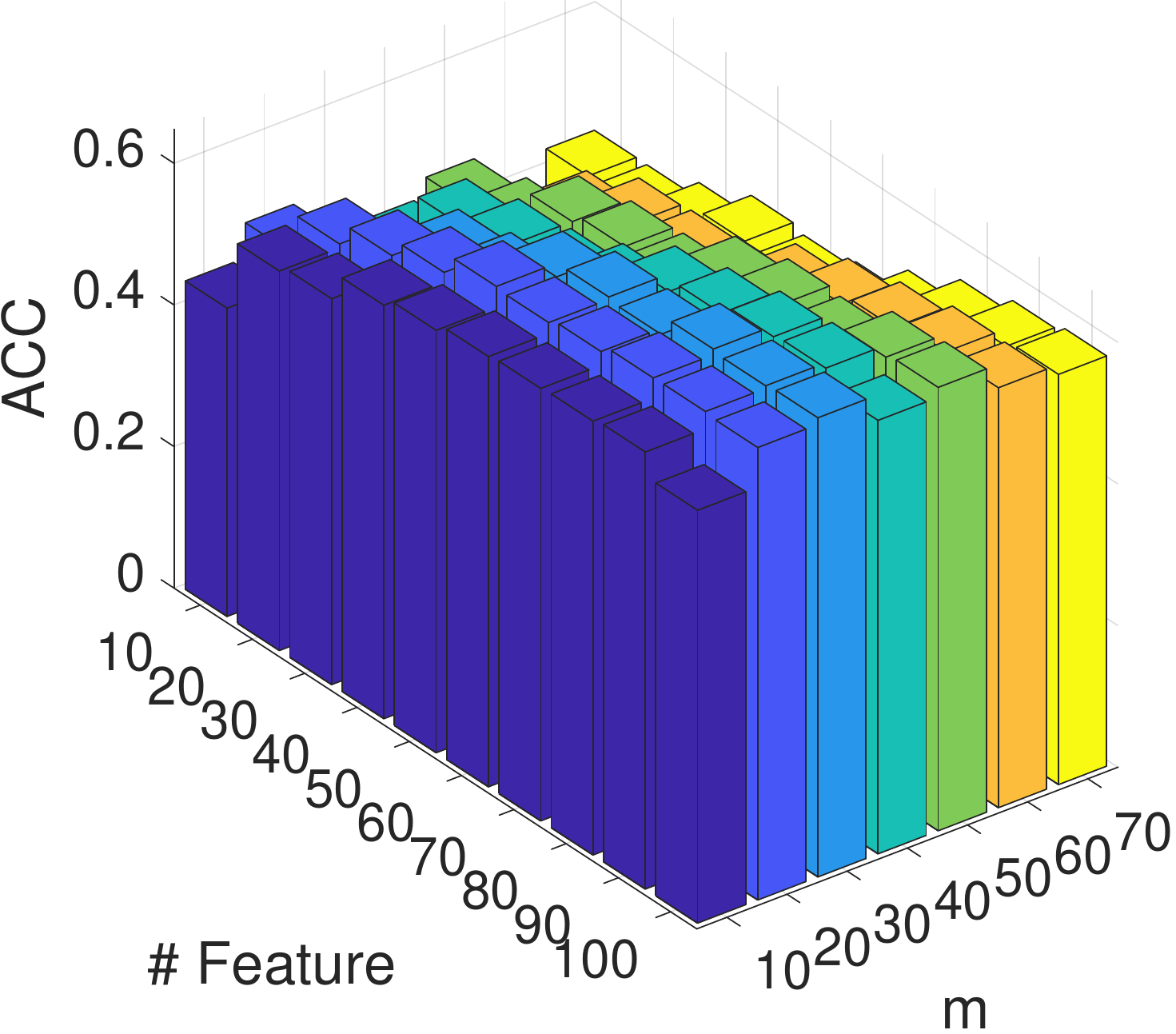}}
~\subfloat[NMI ($\gamma=10^4$)]{\includegraphics[width=0.24\textwidth ]{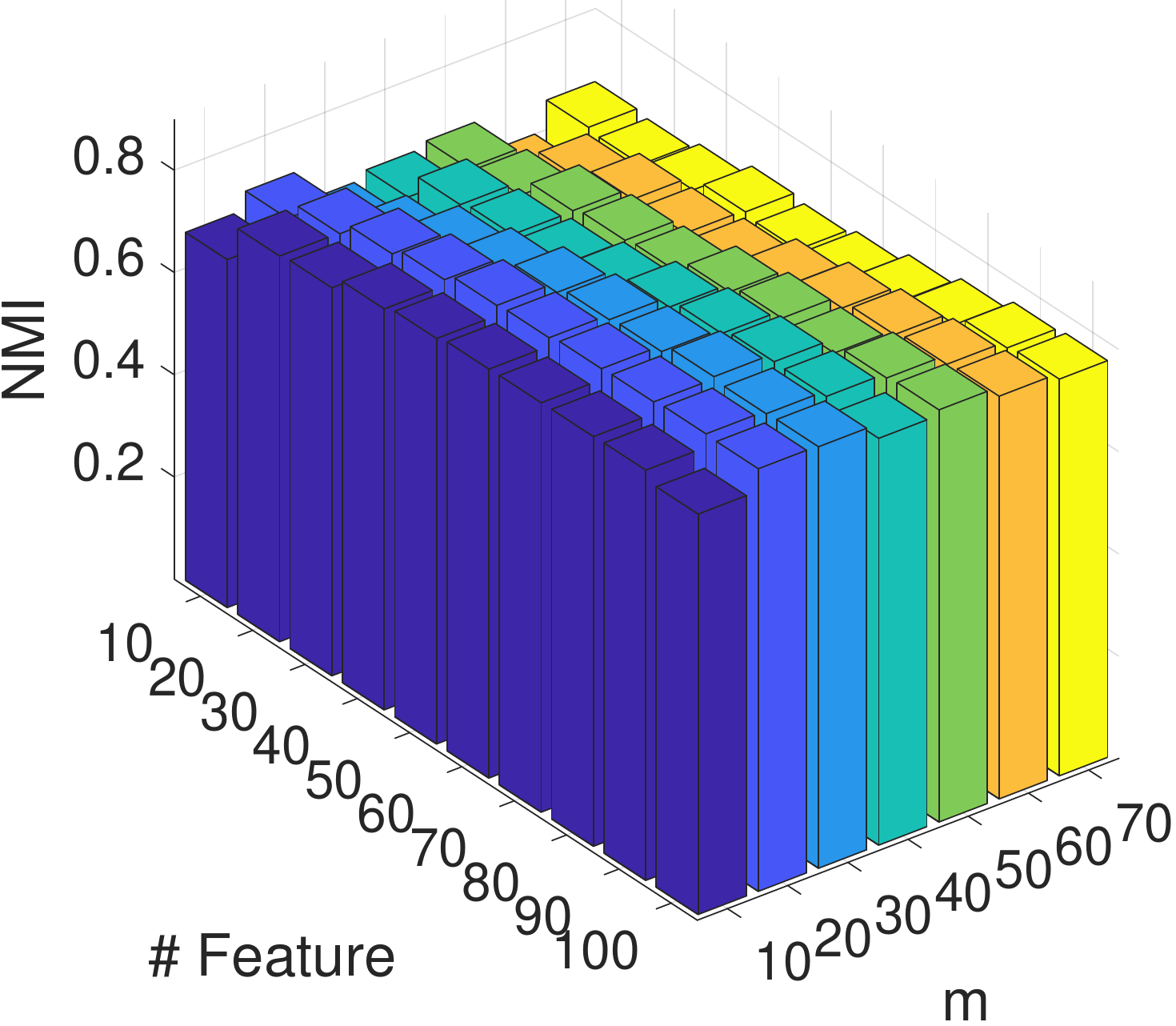}}
\caption{ACC and NMI of SPCAFS under different $\gamma$, $m$ on Imm40 data set.}
\label{fig: Parameter sensitivity}
\end{figure}

\begin{figure}[!htbp]
\centering
\subfloat[Imm40 (ACC)]{\includegraphics[width=0.25\textwidth ]{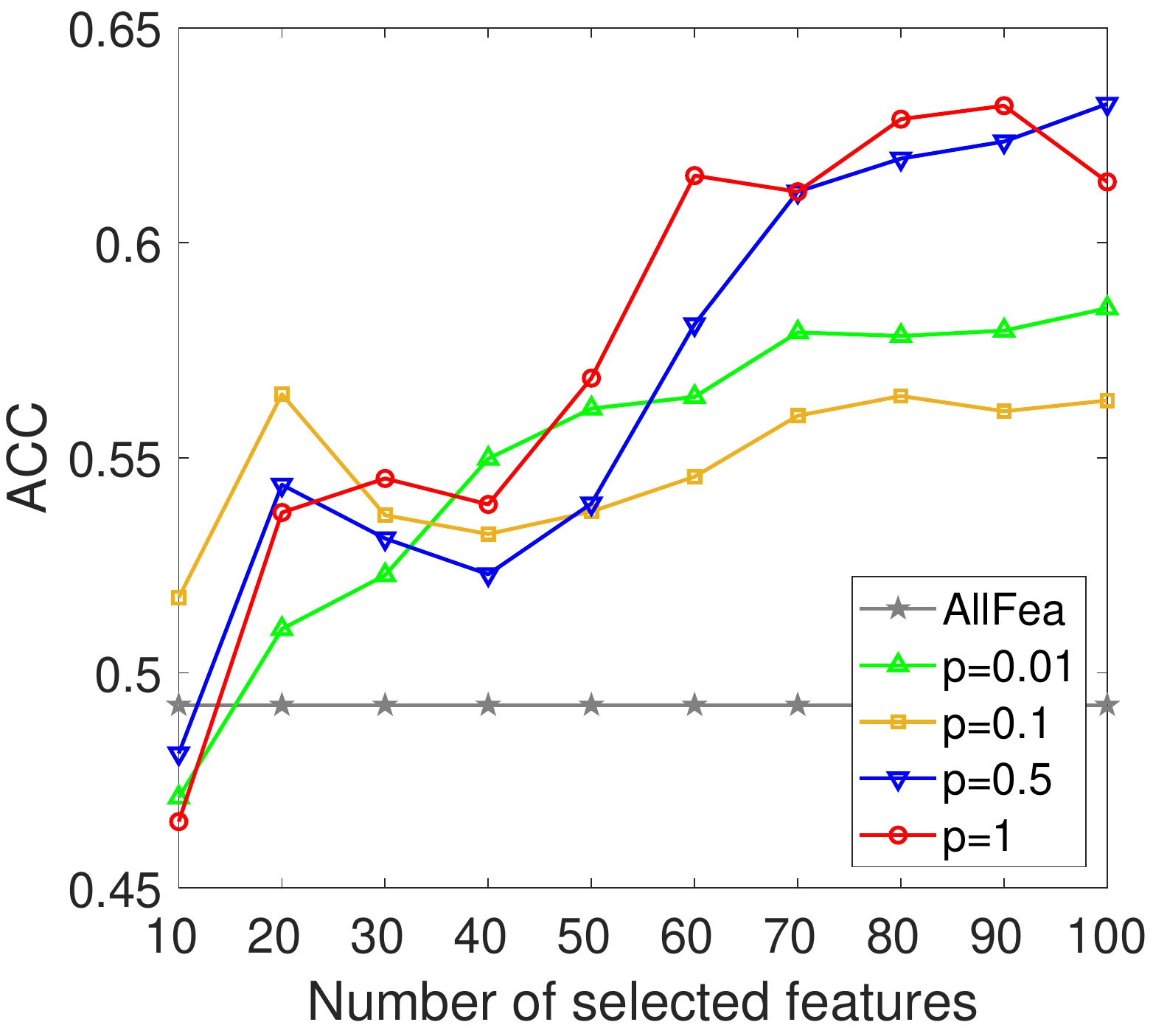}}
~\subfloat[Imm40 (NMI)]{\includegraphics[width=0.25\textwidth ]{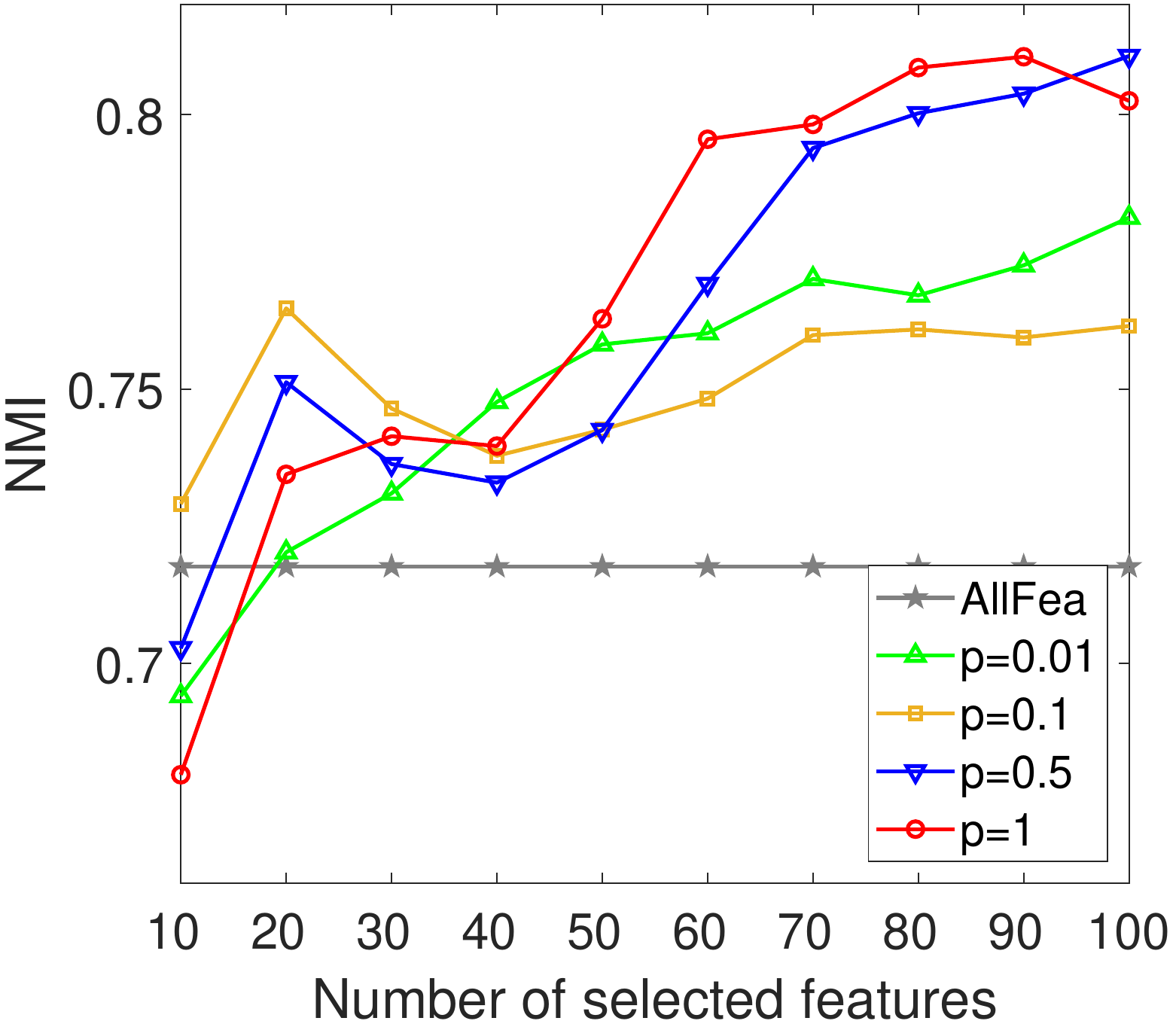}}\\
\subfloat[SRBCTML (ACC)]{\includegraphics[width=0.25\textwidth ]{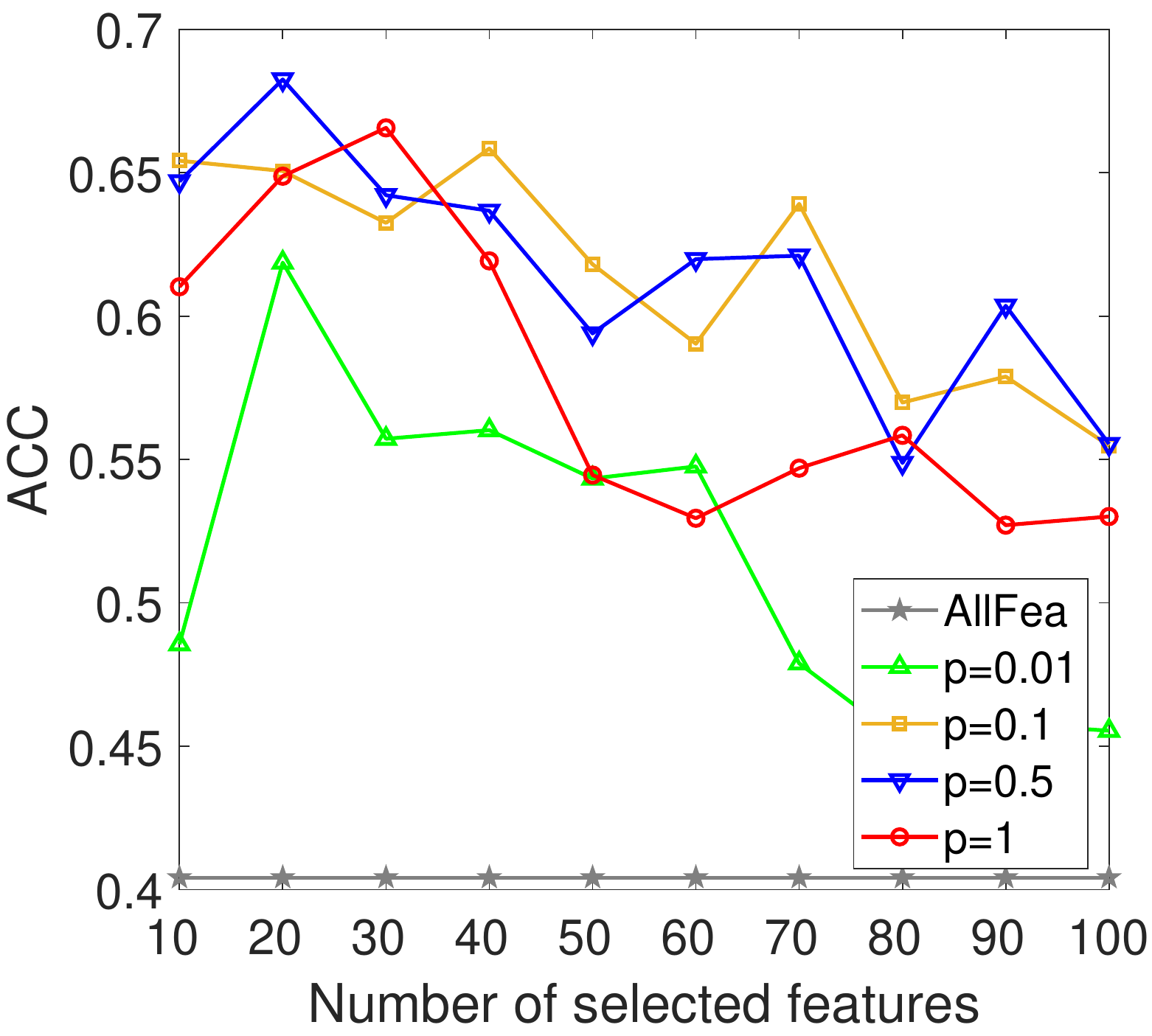}}
~\subfloat[SRBCTML (NMI)]{\includegraphics[width=0.25\textwidth ]{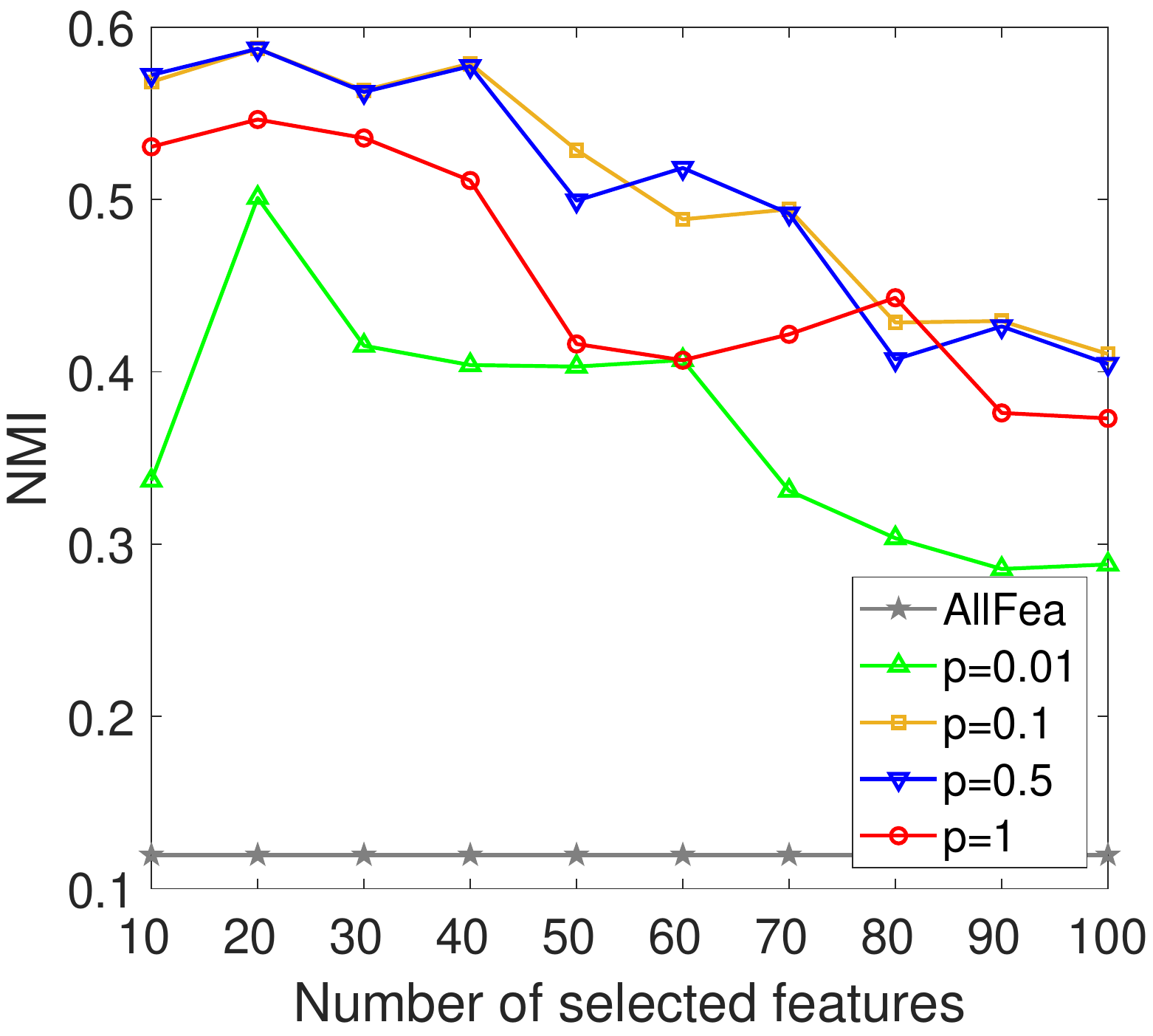}}
\caption{ACC and NMI of SPCAFS under different $p$ on Imm40, SRBCTML data sets.}
\label{fig: p analysis}
\end{figure}

On Imm40 data set,
when $p$ decreases from 1 to 0.1,
the result of feature selection of SPCAFS becomes worse.
On SRBCTML data set,
as the value of $p$ decreases from 1 to 0.01,
the results of feature selection show a trend of rising first and then falling.
The phenomenon suggests that the choice of $p$ is not the smaller the better.
The parameter $p$ is used to balance the sparsity and the convexity of the regularization.
Small $p$ results in highly non-convex problem,
which will increase the difficulty of optimization.

%\subsection{The impact of data size on computational cost }
%
%In Section~\ref{sec:Convergence analysis},
%we have proven the convergence of Algorithm~\ref{algorithm:SPCAFS}.
%We further study the speed of its convergence by experiments.
%The convergence curves of the objective value are demostrated in Fig.~\ref{fig: Convergence curve}.

\section{Conclusions}\label{sec:conclusions}

In the paper, we propose a new method for unsupervised feature selection,
by combining reconstruction error with $\ell_{2,p}$-norm regularization.
The projection matrix is learned by minimizing the reconstruction error under the sparse constraint.
Then, we present an efficient optimization algorithm to solve the proposed unsupervised model,
and analyse the convergence and computational complexity of the proposed algorithm.
Finally,
extensive experiments on real-world data sets demonstrate the effectiveness of our proposed method.

\ifCLASSOPTIONcompsoc
  % The Computer Society usually uses the plural form
  \section*{Acknowledgments}
\else
  % regular IEEE prefers the singular form
  \section*{Acknowledgment}
\fi

Thanks to the donors who have made contributions to the benchmark data sets. 
%This work is supported by the National Natural Science Foundation of China under Grant No. 61772427 and 61751202.

% Can use something like this to put references on a page
% by themselves when using endfloat and the captionsoff option.
\ifCLASSOPTIONcaptionsoff
  \newpage
\fi

\bibliographystyle{IEEEtran}
\bibliography{SWC}
\end{document}